
\documentclass{article}

\usepackage{microtype}
\usepackage{graphicx}
\usepackage{subfigure}
\usepackage{booktabs} 

\usepackage{hyperref}



\usepackage[accepted]{icml2024}

\usepackage{amsmath}
\usepackage{amssymb}
\usepackage{mathtools}
\usepackage{amsthm}
\usepackage{multicol}

\usepackage[capitalize,noabbrev]{cleveref}

\theoremstyle{plain}
\newtheorem{theorem}{Theorem}[section]

\newtheorem{proposition}[theorem]{Proposition}
\newtheorem{lemma}[theorem]{Lemma}

\theoremstyle{plain}
\newtheorem{definition}[theorem]{Definition}

\theoremstyle{plain}
\newtheorem{example}[theorem]{Example}

\theoremstyle{plain}
\newtheorem{remark}[theorem]{Remark}

\newcommand{\shnote}[1]%
    {\textcolor{blue}{ #1}}
\newcommand{\diag}			{{\text{diag}}}

\usepackage[textsize=tiny]{todonotes}

\icmltitlerunning{Mollification Effects of Policy Gradient Methods}

\begin{document}

\twocolumn[
\icmltitle{Mollification Effects of Policy Gradient Methods}




\begin{icmlauthorlist}
\icmlauthor{Tao Wang}{yyy}
\icmlauthor{Sylvia Herbert}{yyy}
\icmlauthor{Sicun Gao}{yyy}
\end{icmlauthorlist}

\icmlaffiliation{yyy}{University of California, San Diego, La Jolla, USA}

\icmlcorrespondingauthor{Tao Wang}{taw003@ucsd.edu}

\icmlkeywords{Machine Learning, ICML}

\vskip 0.3in
]



\printAffiliationsAndNotice{}  

\begin{abstract}
Policy gradient methods have enabled deep reinforcement learning (RL) to approach challenging continuous control problems, even when the underlying systems involve highly nonlinear dynamics that generate complex non-smooth optimization landscapes. We develop a rigorous framework for understanding how policy gradient methods mollify non-smooth optimization landscapes to enable effective policy search, as well as the downside of it: while making the objective function smoother and easier to optimize, the stochastic objective deviates further from the original problem. We demonstrate the equivalence between policy gradient methods and solving backward heat equations. Following the ill-posedness of backward heat equations from PDE theory, we present a fundamental challenge to the use of policy gradient under stochasticity. Moreover, we make the connection between this limitation and the uncertainty principle in harmonic analysis to understand the effects of exploration with stochastic policies in RL. We also provide experimental results to illustrate both the positive and negative aspects of mollification effects in practice.
\end{abstract}

\section{Introduction}


Deep reinforcement learning (RL), especially methods based on policy gradients~\cite{silver, lillicrap, schulman17}, has been successfully used to solve challenging nonlinear control problems~\cite{gu16, Kim-RSS-22}. The approach considers control design as an optimization problem over the parameter space of the policy, and thus the effectiveness of policy gradient methods depend heavily on the {\em optimization landscape} in the policy space. 
Although the global convergence of policy gradient methods has been established for restricted classes of systems such as linear dynamics or tabular state spaces~\cite{fazel, agarwal}, its observed effectiveness for nonlinear systems is not well understood. In fact, it has been shown that in many control settings, especially those with chaotic dynamics, the RL formulation can generate highly non-smooth optimization landscapes \citep{suh, twang} that should challenge the use of gradient-based optimization methods including policy gradient. Existing attempts on bridging this gap have mostly focused on the effectiveness of exploration~\cite{haarnoja18, cai20d}. However, while exploration is a common element in every search-based algorithm, it alone does not fully explain the success of RL over other schemes in high-dimensional spaces.

\begin{figure}[t!]
\centering
\includegraphics[width=0.45\textwidth]{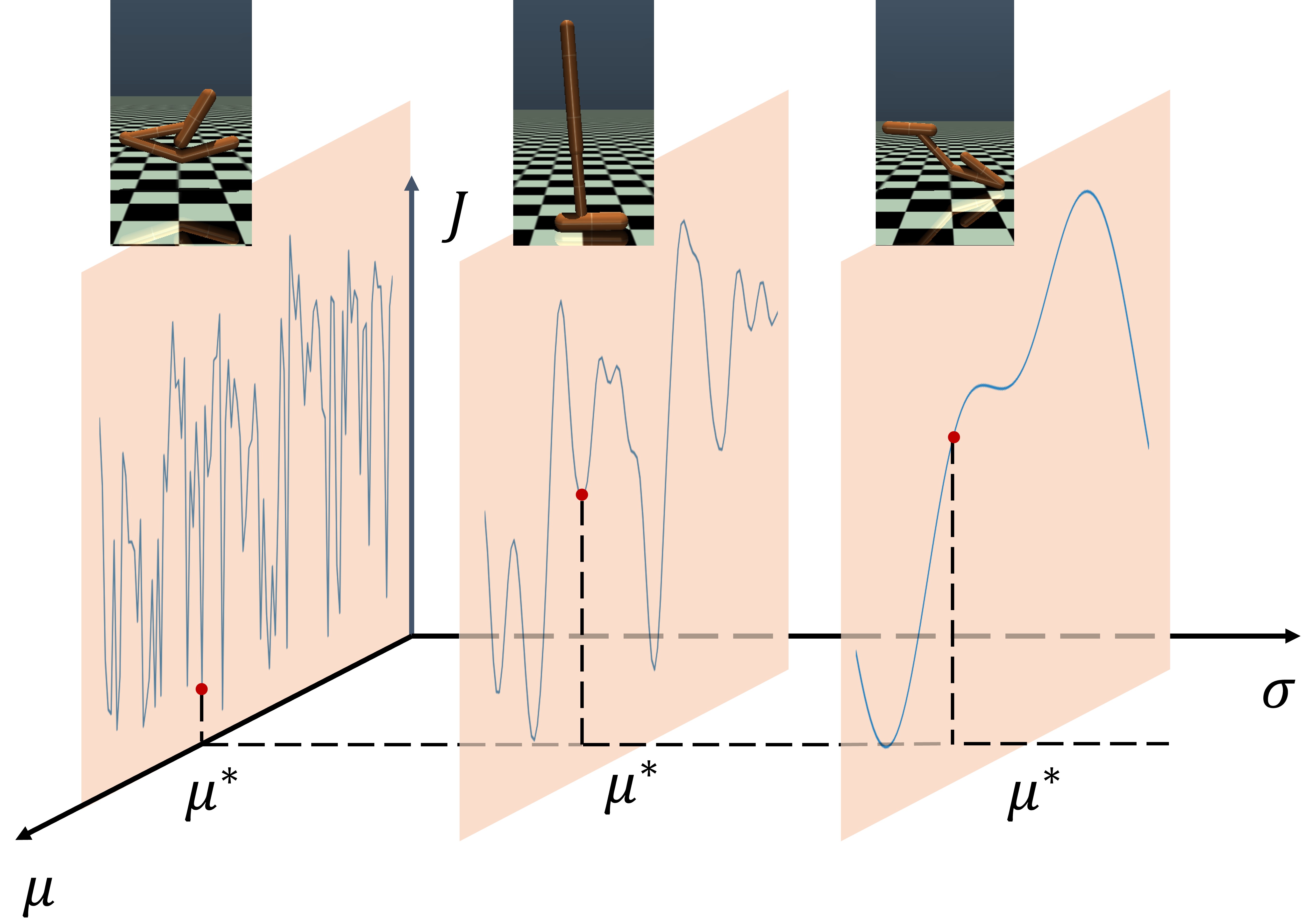}
\caption{The Gaussian kernel in the policy gradient mollifies the optimization landscape. However, when the variance $\sigma^2$ is too small, the landscape remains highly non-smooth. Conversely, if the variance is too large, the Gaussian kernel over-smooths the landscape, eliminating the optimal solution. Both of these lead to failures in the hopper stand task. Details are avaliable in Section~\ref{sec:experiment}.}
\label{fig:illustration}

\end{figure}

Given that the true gradient of the objective function may not exist in many robotics systems, we aim to understand the effectiveness of policy gradient methods through the analytic perspectives of partial differential equations (PDEs) and stochastic dynamical systems. We view the Gaussian noise introduced in the stochastic policies as a smoothing kernel that {\em mollifies} the objective function. Mollification is a concept in analysis that corresponds to smoothing the sharp features of a non-smooth function while remaining close to the original one. In effect, regardless of whether the original landscape is smooth or not, policy gradient methods can be consistently applied to estimate the gradient of the {\em mollified objective}. We draw on PDE theory and establish the equivalence between policy gradient methods and the Cauchy problem for heat equations. In particular, we show that training a policy by stochastic policy gradient algorithms is equivalent to performing gradient ascent for the solution of the corresponding heat equation in the spatio-temporal domain. 
In control problems where deterministic control policies are expected as the outcome, the RL learning process 
corresponds to a time-reversed heat process. However, the backward Cauchy problem for heat equations is ill-posed in terms of the stability of the solution, which becomes less smooth as time decreases \citep{hollig, kabanikhin}; this suggests that reducing the variance of stochastic policy can result in a more non-smooth objective. This effect is especially pronounced when the MDP is chaotic, causing the true value function to contain significant high-frequency components. 

Importantly, the results illustrate a fundamental limitation of policy gradient methods. While the mollification smooths the optimization landscape, it unavoidably changes the original objective. 
This trade-off can be precisely formulated by the uncertainty principle from harmonic analysis, as we visualize in Figure~\ref{fig:illustration}. That is, when the variance of the stochastic policy is too large, the mollified function will deviate too much from the true objective, which makes the estimated gradient no longer informative. 
On the other hand, if the variance is too small, the mollification effect is weak and hence leads to another highly-oscillating landscape. 
In either case, the training process becomes unstable, suggesting the existence of an optimal variance for the stochastic policy in policy gradient methods. 

Equipped with the theoretical results, we conduct experiments to illustrate how our framework can be applied to explain both the successes and failures in practice. In particular, from the view of mollification, we can characterize a class of control problems where RL algorithms consistently face challenges: the region of attraction for the optimal policy is extremely small and thus can be entirely eliminated by the Gaussian kernel in stochastic policies. It also explains why policy gradient methods always encounter difficulties when solving quadrotor-related problems and a detailed discussion is presented in Section~\ref{sec:experiment}.

The contributions of this paper are summarized as follows:

\noindent $\bullet$ We establish a framework that builds the connection between policy gradient methods and heat equations to study their mollification effects. It provides an explanation why policy gradient can still yield effective ascent directions even when the optimization landscape is fractal.

\noindent $\bullet$  We analyze the fundamental trade-off of policy gradient from the perspective of harmonic analysis. In particular, we demonstrate that the variance of stochastic policies should be carefully balanced. The training process loses stability if it is either too small or too large.

\noindent $\bullet$  Numerical results are presented to substantiate our theoretical results. We show that policy gradient methods exhibit limited effectiveness in specific MDPs, such as the quadrotor system. In these cases, the policy landscape has a spike-like structure around the optimal policy, which is then filtered by the mollification effect of policy gradient.

\section{Related Work} 



\paragraph{Optimization Landscapes in RL.}

When the state space in an RL problem is assumed to be finite, the corresponding policy landscape is smooth (though may be non-convex). This allows for various gradient-based algorithms to converge towards the optimal policy \cite{agarwal, russo, xiao, zeng}. Similar results can be obtained for some simple continuous state-space MDPs, such as in linear-quadratic regulator (LQR) \cite{fazel}, linear-quadratic-Gaussian control (LQG) \cite{tang} and robust control \cite{zhang}. For general control settings, however, even the smoothness of objective function is not guaranteed. In particular, it has been shown that the variance of the gradient estimator in chaotic systems is very likely to explode \cite{parmas, metz} due to the long chain of non-linear computations. This phenomenon is a reflection of the fractal structure in both value and policy landscapes \cite{twang}. There has been work on mitigating the effect of non-smooth landscapes, such as local smoothing via Gaussian kernels that act as low-pass filters to block high-frequency components \cite{parmas, suh, zhang23s}, as well as reparameterization techniques \cite{parmas18, parmas21}. Our work contributes to this line by providing a theoretical analysis of the strengths and limitations of the smoothing effect produced by policy gradient methods in the case of fractal landscapes.

\begin{figure}[h!]
\centering

\subfigure[]{\includegraphics[width=0.23\textwidth]{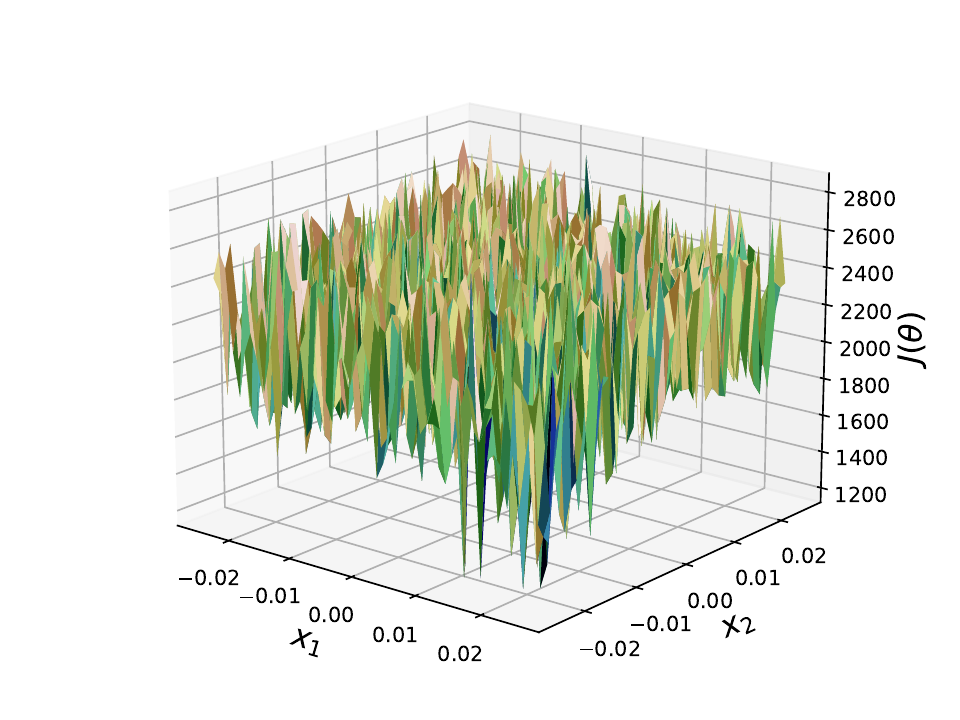}}
\subfigure[]{\includegraphics[width=0.23\textwidth]{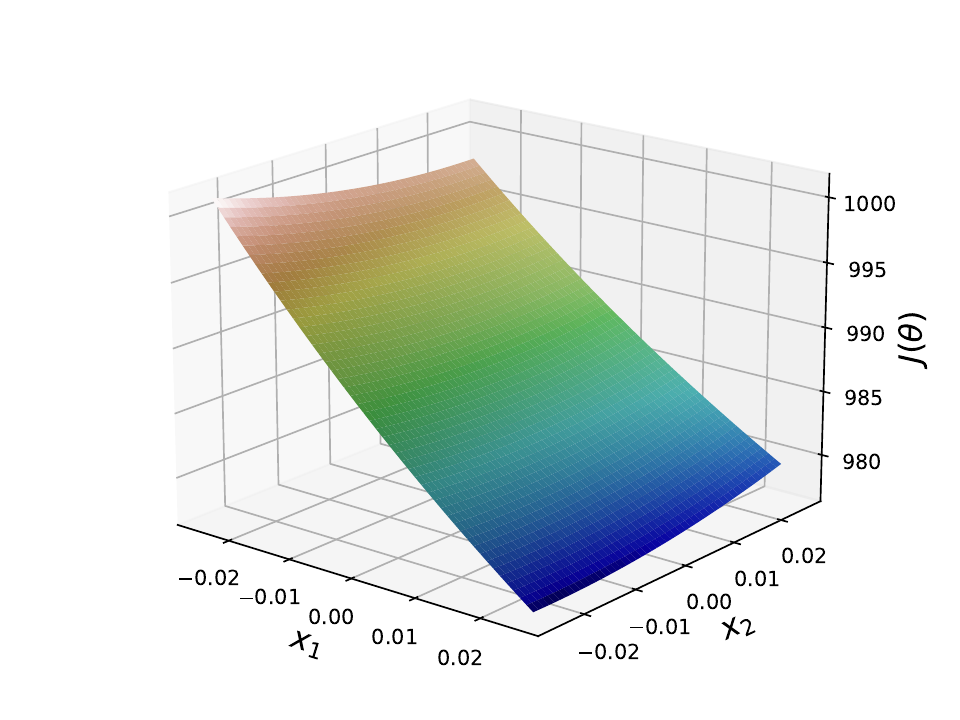}}

\caption{Fractal landscapes occur in chaotic MDPs. For instance, the objective landscape of the double pendulum system as shown in (a) has a fractal structure, in contrast to the non-chaotic single pendulum system in (b). Both systems are controlled by deterministic neural network policies.}
\label{fig:heat}


\end{figure}

\paragraph{Policy Gradient over Non-Smooth Landscapes.} It is observed that policy gradient methods can still provide good solutions when the optimization landscape is non-smooth or fractal \citep{lillicrap, schulman15}. A widely-accepted explanation for its effectiveness is that it encourages the policy to keep exploring undiscovered, high-reward actions \cite{schulman17, cai20d} that are out of the current policy distribution, thereby increasing the likelihood of discovering better policies. Various exploration schemes can be found in actor-critic approaches \cite{haarnoja18}, Q-learning methods \cite{sutton98, jin} and traditional control algorithms \cite{tedrake09}. 

Most deep RL algorithms including policy gradient can be considered as generalized policy iteration methods~\cite{william, sutton99, kakade, silver, schulman15, schulman17}. Typically, they consist of two interacting stages: value approximation and policy improvement. Q-learning is one of the popular methods in model-free settings \cite{watkins, hasselt, hester}, aiming to estimate the Q-function to guide future actions. Actor-critic is another framework that interactively updates the value function and current policy \cite{konda, peters, fujimoto, haarnoja}. There are also works studying value approximation from both theoretical and practical sides \cite{pmlr-v37-schaul15, yiningwang, jin20a}. In this paper, we will focus on the policy improvement stage and presume that the value function is exact.

\paragraph{Mollification in Stochastic Optimization.} Policy gradient is not the only domain where stochasticity demonstrates its strength in mollifying the potentially non-smooth objective functions. 
Algorithms in stochastic optimization, as well as those in zeroth-order optimization, benefit from the mollification effect. \cite{wierstra, shamir, wang18e, bohm}. Their ability to generalize the non-smoothness in objective functions, which is intractable for classical methods, motivates a line of applications in machine learning \cite{agarwal2, lin, chen23}.

\section{Preliminaries}
\label{sec:background}

We consider infinite-horizon MDPs with state space $\mathcal{S} \subseteq \mathbb{R}^n$ and action space $\mathcal{A} = \mathbb{R}^m$. 
The initial state $s_0$ is sampled from distribution $\rho_0$, and at each step $k\geq 0$, the action $a_k$ at state $s_k$ is obtained from the (stochastic or deterministic) policy $\pi_{\theta}(\cdot | s_k)$, parameterized by $\theta \in \mathbb{R}^N$. 
The objective of the RL problem is to maximize the performance metric: 
\begin{eqnarray}   \label{loss}
    J(\theta) &=& \mathbb{E}_{a_k \sim \pi_{\theta}(\cdot | s_k), s_0 \sim \rho_0} [ \sum_{k = 0}^{\infty} \gamma^{ k}  R(s_k, a_k) ]\\
    &=& \int_{\mathcal{S}} \rho^{\pi_{\theta}}(s) \int_{\mathcal{A}} Q^{\pi_{\theta}}(s, a) \pi_\theta (a|s) \ \mathrm{d}a \mathrm{d}s
\end{eqnarray}
where $\gamma \in (0, 1)$ is the discount factor and $R(s, a)$ is the reward function. 
We will assume that the state space $\mathcal{S}$ is compact and the objective function $J(\theta)$ is bounded. In the integral form above, $\rho^\pi(\cdot)$ is the discounted visitation density under $\pi$ (assuming it exists) and $Q^{\pi}$ is the $Q$-function of $\pi$.

\paragraph{Policy gradient methods. } The policy gradient theorem  provides an explicit form of the gradient of the performance objective over the policy parameters that can be estimated by state-action samples~\cite{sutton99}:
\begin{equation}
\label{pgth}
    \nabla_\theta J(\theta) \propto \int_{\mathcal{S}} \rho^{\pi}(s) \int_{\mathcal{A}} Q^{\pi}(s, a) \nabla_\theta \pi_\theta (a|s) \ \mathrm{d}a \mathrm{d}s,
\end{equation}
Dropping $\theta$ from $\pi_{\theta}$ in $\rho^{\pi}$ and $Q^{\pi}$ here is intentional, to highlight the core of the theorem in showing that the gradient operator $\nabla_{\theta}$ only needs to be applied to the $\pi_{\theta}(a|s)$ part of the integrand~\cite{sutton99}. 
It follows that $\nabla_{\theta} J(\theta)$ can be estimated by 
\begin{equation}
\label{eq:gradest}
\nabla_{\theta} J(\theta) \simeq \hat{\mathbb{E}}_{s,a\sim \pi_{\theta}}[A^{\pi_{\theta}}(s, a) \nabla_\theta \pi_\theta (a|s)]
\end{equation}
where $A^{\pi_{\theta}}(s, a) = Q^{\pi_{\theta}}(s, a) - V^{\pi_{\theta}}(s)$ is the advantage function and $V^{\pi_{\theta}}$ is the value function of $\pi_{\theta}$~\cite{advantage}, and $\hat{\mathbb{E}}$ denotes sampled mean, which converges to the integral of (\ref{pgth}) under some necessary smoothness assumptions. 

\paragraph{Fractal landscapes in RL.}

One fundamental assumption for policy gradient methods to work is that the objective function $J(\theta)$ \emph{does have a gradient}, so that \eqref{eq:gradest} can provide a close estimate of it. This assumption, however, may not hold when the underlying MDP is chaotic and has a positive maximal Lyapunov exponent (MLE). Indeed, we have the following result that characterizes the fractal structure in the policy space:
\begin{proposition}
    \citep{twang} Assume that the dynamics, reward function and policy are all Lipschitz continuous with respect to their input variables. Let $\pi_\theta$ be a deterministic policy and $\lambda(\theta)$ denote the MLE of the system. Suppose that $\lambda(\theta) > -\log \gamma$, then
    \begin{itemize}
        \item $V^{\pi_\theta}(\cdot)$ is $\frac{-\log \gamma}{\lambda(\theta)}$-H\"older continuous;

        \item $J(\cdot)$ is $\frac{-\log \gamma}{\lambda(\theta)}$-H\"older continuous.
    \end{itemize}
\end{proposition}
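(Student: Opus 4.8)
The plan is to exploit the structure of $V^{\pi_\theta}$ as a discounted sum of rewards collected along a single deterministic trajectory, and to trade off two competing effects: rewards collected early along two nearby trajectories are close because the trajectories have not yet separated, while rewards collected late are negligible because of the factor $\gamma^k$. The maximal Lyapunov exponent controls where these two effects cross over, and optimizing the crossover horizon will produce exactly the exponent $\frac{-\log\gamma}{\lambda(\theta)}$. Concretely, I would fix two initial states $s, s' \in \mathcal{S}$ with $\delta := |s - s'|$ and write $s_k, s_k'$ for the corresponding trajectories under the deterministic $\pi_\theta$, with $a_k = \pi_\theta(s_k)$. The defining property of the MLE supplies a separation bound of the form $|s_k - s_k'| \le C\,\delta\, e^{\lambda(\theta) k}$ (up to an arbitrarily small slack in the exponent, addressed below). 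Since $\pi_\theta$ is Lipschitz in the state and $R$ is Lipschitz in $(s,a)$, the per-step reward discrepancy satisfies
\begin{equation}
|R(s_k, a_k) - R(s_k', a_k')| \le L\, |s_k - s_k'| \le L C\,\delta\, e^{\lambda(\theta) k},
\end{equation}
while compactness of $\mathcal{S}$ together with Lipschitzness of $R$ gives a uniform bound $|R(s_k,a_k) - R(s_k',a_k')| \le 2 R_{\max}$, where $R_{\max} := \sup |R|$.

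Next I would split the series $V^{\pi_\theta}(s) - V^{\pi_\theta}(s') = \sum_{k \ge 0}\gamma^k (R(s_k,a_k) - R(s_k',a_k'))$ at a horizon $K$, applying the Lipschitz estimate for $k < K$ and the uniform bound for $k \ge K$:
\begin{equation}
|V^{\pi_\theta}(s) - V^{\pi_\theta}(s')| \le L C\,\delta \sum_{k=0}^{K-1} (\gamma\, e^{\lambda(\theta)})^k + 2 R_{\max}\sum_{k \ge K}\gamma^k.
\end{equation}
The hypothesis $\lambda(\theta) > -\log\gamma$ is exactly what makes $\gamma\, e^{\lambda(\theta)} > 1$, so the first (geometric) sum grows like $(\gamma\, e^{\lambda(\theta)})^K$ while the tail decays like $\gamma^K$. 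Balancing the two by choosing $K \approx \frac{1}{\lambda(\theta)}\log(1/\delta)$ collapses the bound to $C'\,\delta^{(-\log\gamma)/\lambda(\theta)}$, which is the claimed Hölder continuity of $V^{\pi_\theta}$. (Note that when $\lambda(\theta) < -\log\gamma$ the first sum also converges and one recovers Lipschitz continuity; the chosen regime is precisely the one in which $V^{\pi_\theta}$ genuinely degrades to a fractional exponent.)

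For $J$ the argument is structurally identical, but the perturbation now lives in $\theta$. Writing $J(\theta) = \mathbb{E}_{s_0 \sim \rho_0}[V^{\pi_\theta}(s_0)]$ and comparing the trajectories generated by $\theta$ and $\theta'$ from a common $s_0$, the difference in actions at each step now has two sources: the drift of the state, contributing $L_\pi |s_k - s_k'|$, and the explicit dependence of the policy on its parameter, contributing $L_\theta |\theta - \theta'|$. This turns the state-separation recursion into an inhomogeneous one whose forcing term is proportional to $|\theta - \theta'|$; solving it still yields a separation bounded by $C\,|\theta - \theta'|\,e^{\lambda(\theta) k}$, after which the same split-and-balance step produces the exponent $\frac{-\log\gamma}{\lambda(\theta)}$ for $J$.

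The main obstacle, and the step demanding the most care, is converting the asymptotic definition of the MLE into the uniform finite-horizon bound $|s_k - s_k'| \le C\,\delta\, e^{\lambda(\theta) k}$ used throughout: the MLE is defined as a limit of $\frac{1}{k}\log(|s_k - s_k'|/\delta)$ as $k \to \infty$, so a priori one controls the exponential rate only up to an $\varepsilon$-slack and only for large $k$. I would therefore first establish the estimate with $\lambda(\theta) + \varepsilon$ in place of $\lambda(\theta)$, carry the slack through the balancing computation to obtain the exponent $\frac{-\log\gamma}{\lambda(\theta) + \varepsilon}$, and finally let $\varepsilon \to 0$. The uniformity of the constants over the compact state space, guaranteed by the Lipschitz hypotheses, is what allows this limit to pass through cleanly.
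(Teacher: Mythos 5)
This proposition is imported verbatim from the cited reference \citep{twang}; the present paper offers no proof of it, so your attempt can only be measured against the standard argument in that source --- which is exactly what you have reconstructed. The split-and-balance computation is correct: with $K=\tfrac{1}{\lambda}\log(1/\delta)$ one gets $\delta(\gamma e^{\lambda})^{K}=\gamma^{K}=\delta^{-\log\gamma/\lambda}$, so both the head and the tail of the series are of order $\delta^{-\log\gamma/\lambda}$, and the hypothesis $\lambda>-\log\gamma$ is used precisely where you use it, to make the head sum genuinely geometric-growing rather than convergent. The treatment of $J$ via the inhomogeneous separation recursion with forcing term $L_{\theta}|\theta-\theta'|$ is also the right reduction.

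The one place where your write-up does not close is the step you yourself flag: passing from the asymptotic definition of the MLE to the uniform bound $|s_k-s_k'|\le C\,\delta\,e^{\lambda k}$ and then ``letting $\varepsilon\to 0$.'' Running the balance with $\lambda+\varepsilon$ gives, for each fixed $\varepsilon>0$, a constant $K_{\varepsilon}$ and the estimate $|V(s)-V(s')|\le K_{\varepsilon}\,\delta^{(-\log\gamma)/(\lambda+\varepsilon)}$. These exponents approach $\tfrac{-\log\gamma}{\lambda}$ only from below, and the constants $K_{\varepsilon}$ need not stay bounded, so the family of estimates does not imply H\"older continuity with the limiting exponent itself; a pointwise limit in $\varepsilon$ does not commute with the ``for some constant $K$'' quantifier in the definition of H\"older continuity. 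The source resolves this by effectively building the uniform finite-horizon divergence bound into its working definition of $\lambda(\theta)$ (equivalently, by stating the exponent up to this slack), and you should either do the same explicitly or weaken the conclusion to ``$\alpha$-H\"older for every $\alpha<\tfrac{-\log\gamma}{\lambda(\theta)}$.'' Apart from this quantifier issue, which is a matter of how the MLE hypothesis is formalized rather than a wrong idea, your argument is the intended one.
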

Note that a function $f(x)$ is $\alpha$-H\"older continuous at $x$ if $|f(x) - f(x')| \leq K |x - x'|^{\alpha}$ for some constant $K > 0$ when $|x - x'|$ is small, which typically indicates non-smoothness when $\alpha < 1$. It reduces to Lipschitz continuity when $\alpha = 1$. In general, the true gradient $\nabla J(\theta)$ may not exist if the MLE is positive. This prompts us to understand the mechanism of \eqref{eq:gradest} from the perspective of mollification, rather than focusing solely on gradient estimation.

\paragraph{Cauchy problem for heat equations.} 

The heat equation, as known as the diffusion equation, describes the evolution of the distribution of temperature in the space $\mathbb{R}^m$ \cite{evans}:
\begin{equation}  \label{heat}
    \begin{cases}
      & 2 u_t - \Delta u = 0, \quad (x, t) \in \mathbb{R}^m \times (0, \infty)\\
      & u = g, \quad (x, t) \in \mathbb{R}^m \times \{ 0 \}.
    \end{cases} \\
\end{equation}
where $x \in \mathbb{R}^m$ is the position and $t \in [0, \infty)$ is the time, $u(x, t)$ is the temperature at $(x, t) \in \mathbb{R}^m$, $\Delta u = u_{x_1 x_1} + ... + u_{x_m x_m}$ is the Laplacian of $u$ and $g$ is the initial distribution of temperature at $t = 0$. The solution to this PDE system is given by 
\begin{equation}
\label{eq:heatsol}
    u(x, t) = \int_{\mathbb{R}^m} g(z) \Phi(x - z, t) \ \mathrm{d}z,
\end{equation}
where $\Phi(x - z, t) = \frac{1}{(2 \pi t)^{m/2}} e^{-\| x - z \|^2 / 2 t}$ is the heat kernel. The solution $u(x, t)$ becomes smooth as $t$ increases, {\em regardless of} how non-smooth the original function $g$ is. This is due to the mollification effect of the heat kernel $\Phi(x-z, t)$, as shown in Figure~\ref{fig:heat} (a).

\section{The Dynamics of Policy Improvement}
\label{sec:heat}


In this section, we will establish the connection between policy gradient methods and the heat equation. We will focus on analyzing policies that are parameterized as isotropic Gaussian distributions, i.e. $\pi_\theta(a|s)=\mathcal{N}(\mu(s),\sigma^2\mathcal{I}_m)$ for fixed state $s \in \mathcal{S}$ , where $\mu(s) \in \mathbb{R}^m$ and $\sigma^2\mathcal{I}_m$ is the covariance matrix with nonzero scalar $\sigma > 0$ and $\mathcal{I}_m$ being the $m\times m$ identity matrix. 
Consider the inner integral of \eqref{pgth}, namely 
\begin{equation}
\label{eq:sur}
     L_s(\theta) :=  \int_{\mathcal{A}} Q^{\pi}(s, a)  \pi_\theta (a|s) \ \mathrm{d}a,
\end{equation}
which is a \emph{mollification} of $Q^{\pi}$ in the action space and hence is smooth with respect to $\theta$ even in the case of chaotic MDPs. This is because the policy density function $\pi_\theta (a|s)$ works as a smoothing kernel that mollifies the (often non-smooth) landscape of $Q^\pi(s, a)$ in $\mathcal{A}$. Since \eqref{eq:sur} is where mollification occurs, we mainly focus on $L_s(\theta)$.

\begin{figure}[h!]
\centering

\subfigure[]{\includegraphics[width=0.23\textwidth]{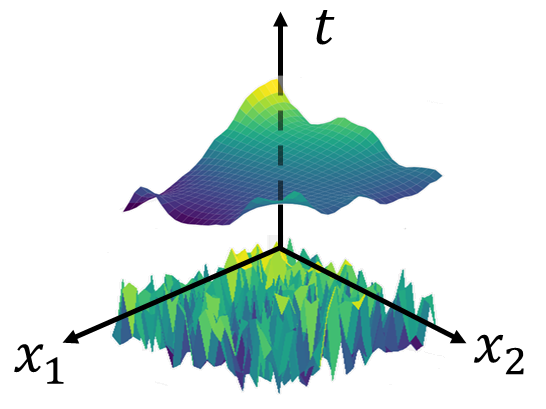}}
\subfigure[]{\includegraphics[width=0.24\textwidth]{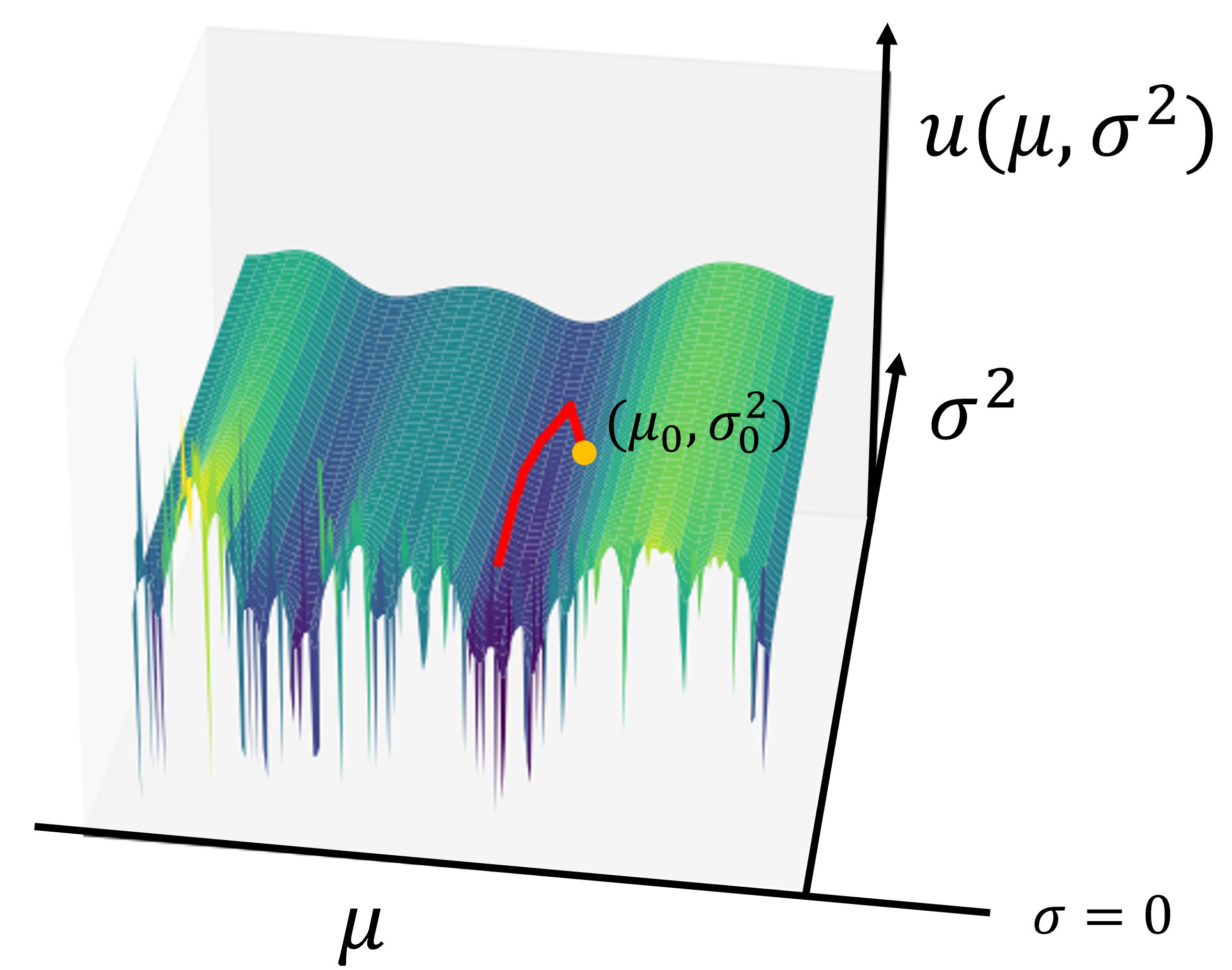}}

\caption{(a) The heat equation smooths the initial temperature distribution as $t$ increases; (b) The gradient flow of $u(\mu, \sigma^2)$ in the solution space.}
\label{fig:heat}

\end{figure}



To see its connection with the Cauchy problem for heat equations, let us first ignore the policy parameterization and only take the mean and variance of $\pi_\theta$ into account, i.e., assume $L_s(\theta) = L_s(\mu, \sigma^2)$ for now, which further has 
$$L_s(\mu, \sigma^2) = \int_{\mathbb{R}^m} Q^{\pi}(s, z) \Phi(\mu - z, \sigma^2) \ \mathrm{d}z.$$
Thus, if we consider $\mu$ and $\sigma^2$ as position and time, respectively, then $L_s(\mu, \sigma^2)$ is exactly the solution of \eqref{heat}. Indeed, we have the formal connection between them:
\begin{proposition}
\label{th:equivalence}
    Let $L_s(\mu, \sigma^2)$ is given by \eqref{eq:sur} where $\pi_\theta \sim \mathcal{N}(\mu,\sigma^2\mathcal{I}_m)$, then $L_s(\mu, \sigma^2)$ is the solution of the heat equation \eqref{heat}.
\end{proposition}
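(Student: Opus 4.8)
The plan is to show that, under the identification of the policy mean $\mu$ with the spatial variable $x$ and the policy variance $\sigma^2$ with the time variable $t$, the mollified objective $L_s(\mu,\sigma^2)$ coincides exactly with the integral \eqref{eq:heatsol} that solves the Cauchy problem \eqref{heat} with initial datum $g(\cdot)=Q^{\pi}(s,\cdot)$. The starting point is to write the isotropic Gaussian density explicitly and compare it with the heat kernel. Since $\pi_\theta(a|s)=(2\pi\sigma^2)^{-m/2}\exp(-\|a-\mu\|^2/2\sigma^2)$ and $\Phi(\mu-z,\sigma^2)=(2\pi\sigma^2)^{-m/2}\exp(-\|\mu-z\|^2/2\sigma^2)$, the two expressions are literally identical once the integration variable $a$ is renamed $z$, since $\|z-\mu\|^2=\|\mu-z\|^2$. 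As $\mathcal{A}=\mathbb{R}^m$, substituting this into \eqref{eq:sur} immediately yields $L_s(\mu,\sigma^2)=\int_{\mathbb{R}^m}Q^{\pi}(s,z)\,\Phi(\mu-z,\sigma^2)\,\mathrm{d}z$, which is precisely \eqref{eq:heatsol} with $x=\mu$, $t=\sigma^2$ and $g=Q^{\pi}(s,\cdot)$.

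It then remains to verify that this integral satisfies both the PDE and the initial condition in \eqref{heat}. For the PDE, I would first record the elementary fact that the heat kernel itself solves $2\Phi_t-\Delta\Phi=0$: a direct computation gives $\Phi_{x_i}=-(x_i/t)\Phi$, hence $\Delta\Phi=(\|x\|^2/t^2-m/t)\Phi$, while $\Phi_t=(\|x\|^2/2t^2-m/2t)\Phi$, so that $2\Phi_t=\Delta\Phi$. Differentiating $L_s$ under the integral sign in $\mu$ and $\sigma^2$ and applying this identity pointwise yields $2\,\partial_{\sigma^2}L_s-\Delta_\mu L_s=\int_{\mathbb{R}^m}Q^{\pi}(s,z)\,[\,2\Phi_t-\Delta\Phi\,](\mu-z,\sigma^2)\,\mathrm{d}z=0$. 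For the initial condition, I would invoke the fact that $\{\Phi(\cdot,\sigma^2)\}_{\sigma^2>0}$ is an approximate identity, so that $L_s(\mu,\sigma^2)\to Q^{\pi}(s,\mu)$ as $\sigma^2\to 0^+$ at every continuity point of $Q^{\pi}(s,\cdot)$, recovering $g=Q^{\pi}(s,\cdot)$.

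The main obstacle is the analytic justification of these limiting and differentiation steps, because in the chaotic regime $Q^{\pi}(s,\cdot)$ is only H\"older continuous (by the cited proposition) and need not be differentiable. The saving feature is that all of the irregularity is carried by $g$, while the kernel $\Phi$ is $C^{\infty}$ and, together with all its $\mu$- and $\sigma^2$-derivatives, is dominated by an integrable envelope (a polynomial times a Gaussian) uniformly on compact subsets of $\{(\mu,\sigma^2):\sigma^2>0\}$. Combined with the standing assumption that $\mathcal{S}$ is compact and $J$---and hence $Q^{\pi}$---is bounded, this lets me pass the derivatives inside the integral via dominated convergence, so the non-smoothness of $Q^{\pi}$ never obstructs the computation for $\sigma^2>0$. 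The approximate-identity limit then only requires continuity of $Q^{\pi}(s,\cdot)$, which is guaranteed by its H\"older regularity.
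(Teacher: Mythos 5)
Your proposal is correct and follows essentially the same route as the paper: identify the isotropic Gaussian density $\pi_\theta(a|s)$ with the heat kernel $\Phi(\mu-z,\sigma^2)$ under the change of variable $a\mapsto z$, so that $L_s(\mu,\sigma^2)$ becomes exactly the representation formula \eqref{eq:heatsol} with initial datum $g=Q^{\pi}(s,\cdot)$, $x=\mu$, $t=\sigma^2$. The paper treats the conclusion as immediate from this identification, whereas you additionally verify that the kernel satisfies $2\Phi_t-\Delta\Phi=0$, justify differentiation under the integral by dominated convergence, and recover the initial condition via the approximate-identity property; these are correct and welcome details rather than a different argument.
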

This connection enables us to analyze the dynamics of policy gradient methods from the view of PDE theory.

\paragraph{Smoothing by mollication.}


Essentially, the heat equation mollifies the solution through the Gaussian kernel, which acts as a low-pass filter and blocks high-frequency components. Consider the one-dimensional case in which we expand the initial condition $g(x) = \sum_{k = -\infty}^\infty B_k  e^{i k x}$ in Fourier series (here we assume its existence for the purpose of illustration), the solution of \eqref{heat} is given by
$$\tilde{u}(x, t) = \sum_{k = -\infty}^\infty B_k e^{-k^2 t} e^{i k x}$$
where $t \geq 0$. Therefore, for any frequency $k \in \mathbb{Z}$, the magnitude of this frequency $|B_k e^{-k^2 t}|$ decays exponentially fast as $t$ increases, especially for higher frequencies (larger $k$). It means that the heat equation mollifies functions by suppressing their high-frequency components that cause fractal structures in the optimization landscape.




\paragraph{Mollified optimization landscape.}

Understanding that policy gradient methods are equivalent to performing gradient descent on the solution $u(\mu, \sigma^2)$ in \eqref{heat} over the $(\mu, \sigma^2)$-space allows us to better predict the behavior of policy gradient in the solution space of the heat equation (Figure~\ref{fig:heat}(b)). In particular, an important property of the heat equation is the strong maximum principle: 
\begin{proposition}
[Strong Maximum Principle \cite{evans}]
\label{prop:heat}
    Let $u$ be the solution of \eqref{heat}. Suppose that $(x_0, t_0) \in \mathbb{R}^d \times (0, \infty)$, then for any $\delta_1 > 0$ and $\delta_2 \in (0, t_0)$, exactly one of the following statements is true:
    
\noindent (I) There exists $(x', t') \in \mathcal{B}(x_0, \delta_1) \times (t_0 - \delta_2, t_0)$ such that $u(x', t') > u(x, t)$;
 
\noindent (II) $u$ is constant in $\overline{\mathcal{B}(x_0, \delta_1)} \times [0, t_0]$. 
\end{proposition}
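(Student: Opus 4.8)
The plan is to recognize Proposition~\ref{prop:heat} as a localized form of the strong maximum principle for the heat equation and to reduce it to the classical statement in~\cite{evans} after two cosmetic adjustments. First, equation~\eqref{heat} carries the factor $2$, i.e.\ $u_t = \tfrac{1}{2}\Delta u$; rescaling time via $\tau = t/2$ turns it into the standard heat equation $u_\tau = \Delta u$, so every classical tool (the heat-ball mean value property, the weak and strong maximum principles) applies verbatim after this change of variables. Second, I would dispose of the ``exactly one'' phrasing by first checking mutual exclusivity, which is immediate: if (II) holds then $u \equiv u(x_0,t_0)$ on the closed cylinder, so no point can satisfy the strict inequality in (I). The substance is therefore to show that (I) and (II) cannot both fail, i.e.\ that the negation of (I) forces (II).

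Assume (I) fails for the given $\delta_1,\delta_2$, so $u(x',t') \le u(x_0,t_0)$ for every $(x',t')$ in the open cylinder $\mathcal{B}(x_0,\delta_1)\times(t_0-\delta_2,t_0)$. By continuity of $u$ this inequality passes to the closure, so $(x_0,t_0)$ maximizes $u$ over the closed parabolic cylinder $\overline{\mathcal{B}(x_0,\delta_1)}\times[t_0-\delta_2,t_0]$, attained on its top time slice. I would then invoke the mean value property of caloric functions: $u(x_0,t_0)$ equals a strictly positive weighted average of $u$ over the heat ball $E(x_0,t_0;r)$, and every such heat ball lies strictly below the level $t_0$, hence inside the cylinder for $r$ small. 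Since all averaged values are $\le u(x_0,t_0)$ while their weighted mean equals $u(x_0,t_0)$, the function must be identically $u(x_0,t_0)=:c$ on that heat ball, giving constancy on an open subset of the cylinder just below $(x_0,t_0)$.

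It remains to propagate this local constancy to all of $\overline{\mathcal{B}(x_0,\delta_1)}\times[0,t_0]$, and this is where I expect the main obstacle to sit, precisely because it runs \emph{backward} in time, the direction in which heat flow is irreversible. I see two routes. The first is the classical connectedness argument of~\cite{evans}: join any earlier point $(y,s)$ in the cylinder to $(x_0,t_0)$ by a segment along which time strictly increases and slide heat balls along it to conclude $u\equiv c$ on the whole segment; convexity of the ball keeps the segments inside the domain, and continuity up to $t=0$ extends constancy to the closed interval. The second, and in my view cleaner, route exploits spatial real-analyticity: for each fixed $\tau>0$ the solution $u(\cdot,\tau)$ extends to an entire function of the space variable, so constancy on a spatial ball at a single time $t_1$ forces $u(\cdot,t_1)\equiv c$ on all of $\mathbb{R}^m$, after which a Fourier-transform computation on the representation~\eqref{eq:heatsol} shows the only initial datum consistent with a constant-in-space time slice is $g\equiv c$, yielding (II).

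The delicate points I would verify carefully are that the representation~\eqref{eq:heatsol} is legitimate here, using boundedness of $Q^{\pi}$ from the bounded reward, compact state space, and $\gamma<1$; that the heat-ball mean value property survives the time rescaling with the correct normalization; and, for the analyticity route, the passage from an open spatial set to all of $\mathbb{R}^m$ via the identity theorem for real-analytic functions. Matching the thin-cylinder hypothesis of (I), which only controls $u$ for $t'\in(t_0-\delta_2,t_0)$, to the full backward conclusion on $[0,t_0]$ is exactly the step the analyticity argument resolves most transparently, since a single interior time slice already determines the entire solution.
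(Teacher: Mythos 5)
Your proof is essentially correct, but note that the paper offers no proof of this proposition at all --- it is stated as a direct citation of the strong maximum principle from \cite{evans}, so the comparison is between your reconstruction and a bare appeal to the textbook. The useful observation your write-up surfaces is that the proposition as localized here is \emph{not} quite a verbatim instance of the textbook theorem: the failure of (I) only makes $(x_0,t_0)$ a maximum over the thin cylinder $\overline{\mathcal{B}(x_0,\delta_1)}\times[t_0-\delta_2,t_0]$, and Evans' theorem (applied with time origin shifted to $t_0-\delta_2$) then yields constancy only on that thin slab, whereas conclusion (II) asserts constancy all the way down to $t=0$. Your first route (sliding heat balls along time-increasing segments) does not by itself bridge this, since below $t_0-\delta_2$ you no longer know $u\le u(x_0,t_0)$; you correctly flag this and resolve it with the second route: spatial real-analyticity of $u(\cdot,t_1)$ for any $t_1$ in the slab (legitimate here because $g=Q^\pi(s,\cdot)$ is bounded, so the convolution \eqref{eq:heatsol} is entire in $x$) forces $u(\cdot,t_1)\equiv c$ on all of $\mathbb{R}^m$, and then the Fourier-side identity $\hat g\,e^{-|\xi|^2 t_1/2}=c\,\widehat{1}$ localizes $\operatorname{supp}\hat g$ at the origin, so the bounded datum $g$ is constant and (II) follows globally. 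The time rescaling $\tau=t/2$ for the factor of $2$ in \eqref{heat}, and the mutual exclusivity of (I) and (II), are handled correctly; you also implicitly (and correctly) read the typo ``$u(x',t')>u(x,t)$'' in (I) as ``$u(x',t')>u(x_0,t_0)$''. In short, your argument supplies a backward-uniqueness step that the paper's citation silently subsumes, and it is the step that actually makes the stated form of the proposition true.
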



This results states that for any $(x_0, t_0)$, it cannot be a strict local maximum of $u(x, t)$ when $t_0 > 0$. Following the equivalence shown in Proposition~\ref{th:equivalence}, we see that for any mean-variance pair $(\mu, \sigma^2)$ with some positive $t$, 
there always exists at least one increasing direction in a neighborhood of $(\mu, \sigma^2)$. It also explains why policy gradient algorithms can improve the performance even in the case that the true optimization landscape is fractal. The convergence modes of policy gradient follow from the connection:
\begin{theorem}
\label{th:flow1}
    Let $\pi_\theta$ be an isotropic Gaussian policy $\mathcal{N}(\mu,\sigma^2\mathcal{I}_m)$. 
    Suppose that $\nabla_{(\mu, \sigma^2)} L_s(\mu, \sigma^2) = 0$, then either $\pi_\theta$ is deterministic, or $\pi_\theta$ is stochastic and $\theta$ is not a strict local maximum of $L_s(\theta)$. Then exactly one of the following statements holds:
    
    \noindent(I) $\pi_\theta$ is deterministic; 
    
    \noindent(II) $\pi_\theta$ is stochastic and $\theta$ is not a strict local maximum of $L_s(\theta)$.      
\end{theorem}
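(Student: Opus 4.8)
The plan is to lean entirely on the equivalence of Proposition~\ref{th:equivalence}: under the dictionary $x = \mu$ and $t = \sigma^2$, the map $L_s(\mu,\sigma^2)$ coincides with the solution $u(x,t)$ of the heat equation~\eqref{heat}. Following the setup of this section I identify the parameter $\theta$ with the mean--variance pair $(\mu,\sigma^2)$, so that $L_s(\theta) = L_s(\mu,\sigma^2) = u$. The statement is organized around the single dichotomy $\sigma = 0$ versus $\sigma > 0$. These two alternatives are mutually exclusive by definition, since (I) asserts $\sigma = 0$ and (II) asserts $\sigma > 0$; what needs proof, and what makes \emph{exactly one} of them hold, is that whenever $\sigma > 0$ the stationary point cannot be a strict local maximum, so that case~(II) is always available when we are not in case~(I).

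The deterministic branch is immediate: if $\sigma = 0$ then $\pi_\theta$ is a point mass at $\mu$, which is precisely statement~(I), and no further argument is needed. The entire content therefore lies in the stochastic branch $\sigma > 0$, equivalently the interior time $t = \sigma^2 > 0$ in the space--time domain $\mathbb{R}^m \times (0,\infty)$.

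For this branch I would invoke the Strong Maximum Principle (Proposition~\ref{prop:heat}) at the point $(x_0,t_0) = (\mu,\sigma^2)$ with $t_0 > 0$. It returns one of two sub-cases: either there is a point $(x',t')$ with $t' < t_0$ lying in an arbitrarily small space--time neighborhood and satisfying $u(x',t') > u(\mu,\sigma^2)$, or $u$ is constant on a full neighborhood of $(\mu,\sigma^2)$. In the first sub-case points strictly exceeding the value at $(\mu,\sigma^2)$ accumulate at $(\mu,\sigma^2)$, so it is not a strict local maximum; in the second sub-case $u$ is locally constant, which likewise rules out a \emph{strict} local maximum. Either way $(\mu,\sigma^2)$ fails to be a strict local maximum of $L_s$, establishing statement~(II). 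I would remark that this argument does not even consume the hypothesis $\nabla_{(\mu,\sigma^2)} L_s = 0$; that hypothesis enters only through the framing of the result, which characterizes the stationary points at which gradient ascent can come to rest.

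To make the role of the gradient hypothesis transparent I would also sketch a direct second-order alternative. Reading~\eqref{heat} as $2\,\partial_{\sigma^2} L_s = \Delta_\mu L_s$ and using $\partial_{\sigma^2} L_s = 0$ at the stationary point yields $\Delta_\mu L_s = \operatorname{tr}\!\big(\nabla^2_\mu L_s\big) = 0$. A strict local maximum would in particular make the slice Hessian $\nabla^2_\mu L_s$ (at fixed $\sigma^2$) negative semidefinite, hence of nonpositive trace; together with $\operatorname{tr}(\nabla^2_\mu L_s) = 0$ this forces $\nabla^2_\mu L_s = 0$, leaving the second-order test inconclusive. This residual degenerate case is exactly the obstacle, since curvature data alone cannot then exclude a strict maximum (think of $-\|\mu\|^4$); it is precisely what the Strong Maximum Principle dispatches without any higher-order expansion, which is why I would present the maximum-principle argument as primary. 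A final point to treat with care is the identification $\theta \equiv (\mu,\sigma^2)$, checking that the reparameterization does not create a strict local maximum invisible to the heat-equation variables.
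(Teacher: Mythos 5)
Your proposal is correct and follows essentially the same route as the paper's own proof: both reduce the statement to the stochastic case $\sigma>0$ and invoke the Strong Maximum Principle (Proposition~\ref{prop:heat}) at $(\mu,\sigma^2)$, concluding that in either alternative (a strictly larger nearby value, or local constancy) the point cannot be a strict local maximum. Your additional observations --- that the gradient hypothesis is not actually consumed, and the second-order sketch via $\operatorname{tr}(\nabla^2_\mu L_s)=0$ --- are sound supplements but do not change the core argument.
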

\begin{proof}
    Suppose that the policy is stochastic, i.e., $\sigma > 0$. According to Proposition~\ref{prop:heat}, for any $\delta_1, \delta_2 > 0$, $L_s(\mu, \sigma^2)$ is either constant in $\overline{\mathcal{B}(\mu_0, \delta_1)} \times [0, \sigma^2_0]$, or there exists another $[\mu', \sigma'^2] \in \mathcal{B}(\mu, \delta_1) \times (\sigma^2 - \delta_2, \sigma^2)$, which implies that $\pi_\theta$ cannot be a strict local maximum of $L_s(\theta)$.
\end{proof} 

Similar results can be obtained for the gradient-descent case using the Strong Minimum Principle. In other words, if policy gradient converges toward a strict local maximum of $L_s(\mu, \sigma^2)$, we know that the final policy must be deterministic. It confirms the appropriate use of policy gradient methods in continuous control problems, where the control policy used in practice should be deterministic.

\paragraph{Policy parameterization.}
In policy gradient methods, the mean of the Gaussian policy is typically parameterized through some differentiable representation, namely $\mu = \mu(s_0; \zeta)$ where $\zeta \in \mathbb{R}^{N_0}$ denotes the parameters in the representation, such as weights and biases in a neural network. 
In this case, we have $\theta = [\zeta, \sigma^2]$ as the policy parameters. Note that 
\[\nabla_\zeta L_s(\zeta, \sigma^2) =\nabla_\mu u(\mu(s_0; \zeta) \ \frac{\partial \mu}{\partial \zeta} \Big|_{s = s_0},\]
where $\frac{\partial \mu}{\partial \zeta}$ is the Jacobian matrix of $\mu$ with respect to the parameter $\zeta$, and the full gradient is given as
\begin{equation}
\label{eq:parameterized}
    \nabla_\theta L_s(\theta) = [\nabla_\mu u(\mu(\zeta), t) \ \frac{\partial \mu}{\partial \zeta} \Big|_{s = s_0}, u_{\sigma^2}(\mu(\zeta), \sigma^2))]^T.
\end{equation}
Therefore, the degeneracy of the parameterization $\mu(s_0; \zeta)$ plays a crucial role in analyzing the gradient flow of \eqref{eq:parameterized}: given an arbitrary initial state $s_0$, the Jacobian $\frac{\partial \mu(s; \zeta)}{\partial \zeta} \Big|_{s = s_0}$ is a linear transformation from the action space $\mathcal{A} = \mathbb{R}^m$ to the representation parameter space $\mathbb{R}^{N_0}$. This mapping is non-degenerate if it is injective, i.e., $\ker \ \frac{\partial \mu}{\partial \zeta} \Big|_{s = s_0} = \{ \textbf{0} \}$, which establishes a one-to-one correspondence between the gradient flow in Theorem~\ref{th:flow1} and in the new parameterization space $\mathbb{R}^{N_0}$: 

\begin{theorem}
\label{th:parameterized}
    Let $\pi_\theta(\cdot|s_0) \sim \mathcal{N}(\mu, \sigma^2 \mathcal{I}_d)$ be an isotropic Gaussian policy where $\mu = \mu(s_0; \zeta)$ is parameterized by $\zeta \in \mathbb{R}^{N_0}$. Also, assume that $\frac{\partial \mu}{\partial \zeta} \Big|_{s = s_0}$ is injective. Suppose that $\nabla_\theta L_{s_0}(\theta) = 0$ where $\theta = [\zeta, \sigma^2]^T$, then exactly one of the following statements holds:

\noindent (I) $\pi_\theta$ is deterministic; 
    
\noindent (II) $\pi_\theta$ is stochastic and $\theta$ is not a strict local maximum of $L_{s_0}(\theta)$.

\end{theorem}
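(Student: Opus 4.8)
The plan is to reduce Theorem~\ref{th:parameterized} to its unparameterized counterpart, Theorem~\ref{th:flow1}, using the injectivity hypothesis to move between the $(\mu,\sigma^2)$-space and the $(\zeta,\sigma^2)$-space in both directions. As in Theorem~\ref{th:flow1}, the dichotomy (I) versus (II) is just $\sigma=0$ versus $\sigma>0$, which is automatically exclusive and exhaustive; the only real content is to show that when $\pi_\theta$ is stochastic ($\sigma>0$), the stationary point $\theta$ cannot be a strict local maximum of $L_{s_0}$.

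First I would decompose the stationarity condition. By \eqref{eq:parameterized}, $\nabla_\theta L_{s_0}(\theta)=0$ splits into the $\zeta$-block $\nabla_\mu u\,\frac{\partial\mu}{\partial\zeta}\big|_{s=s_0}=0$ and the scalar equation $u_{\sigma^2}=0$. Transposing the first gives $\big(\frac{\partial\mu}{\partial\zeta}\big)^{T}(\nabla_\mu u)^{T}=0$; since by hypothesis this transpose Jacobian, viewed as a map $\mathbb{R}^m\to\mathbb{R}^{N_0}$, has trivial kernel, we conclude $\nabla_\mu u=0$. Combined with $u_{\sigma^2}=0$ this yields $\nabla_{(\mu,\sigma^2)}L_{s_0}(\mu,\sigma^2)=0$, where I identify $L_{s_0}$ with the heat solution $u$ via Proposition~\ref{th:equivalence}. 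This is exactly the hypothesis of Theorem~\ref{th:flow1}, so in the $(\mu,\sigma^2)$-space the point $(\mu(\zeta),\sigma^2)$ is either deterministic or not a strict local maximum.

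The transfer back is the crux. Because the Jacobian $\frac{\partial\mu}{\partial\zeta}\big|_{s=s_0}$ has full row rank (its transpose being injective), the map $\zeta\mapsto\mu(s_0;\zeta)$ is a submersion at $\zeta$, so by the open-mapping/implicit-function property its image covers a full neighborhood of $\mu(\zeta)$ in $\mathbb{R}^m$: for every $\mu'$ close enough to $\mu(\zeta)$ there is a $\zeta'$ close to $\zeta$ with $\mu(s_0;\zeta')=\mu'$. I then invoke Proposition~\ref{prop:heat} at $(\mu(\zeta),\sigma^2)$ with $\sigma^2>0$. In branch (I) there is $(\mu',\sigma'^2)$ with $\mu'$ near $\mu(\zeta)$, $\sigma'^2<\sigma^2$, and $u(\mu',\sigma'^2)>u(\mu(\zeta),\sigma^2)$; choosing $\zeta'$ with $\mu(s_0;\zeta')=\mu'$ gives $L_{s_0}(\zeta',\sigma'^2)=u(\mu',\sigma'^2)>L_{s_0}(\zeta,\sigma^2)$ at a point arbitrarily close to $\theta$. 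In branch (II), $u$ is constant on $\overline{\mathcal{B}(\mu(\zeta),\delta_1)}\times[0,\sigma^2]$, so $L_{s_0}(\zeta',\sigma'^2)=L_{s_0}(\zeta,\sigma^2)$ for all $\zeta'$ near $\zeta$ and all $\sigma'^2\le\sigma^2$ near $\sigma^2$; equal-valued neighbors again rule out strictness. Either way $\theta$ is not a strict local maximum, establishing (II).

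I expect this transfer step to be the main obstacle, for two reasons. First, the open-mapping claim must be justified honestly: surjectivity of the Jacobian gives only \emph{local} solvability, so $\delta_1$ must be shrunk so that the $\delta_1$-ball around $\mu(\zeta)$ lies inside the image of a small $\zeta$-neighborhood, and one must verify the recovered $\zeta'$ stays close to $\zeta$. Second, Proposition~\ref{prop:heat} only supplies a witness at an \emph{earlier} time $\sigma'^2<\sigma^2$, i.e.\ on a backward-in-time neighborhood; I must therefore argue that a strictly greater (or merely equal) value on the backward side already contradicts strict maximality over the full $(\zeta,\sigma^2)$-neighborhood, which it does, since a strict local maximum would require \emph{every} sufficiently close point, including those with $\sigma'^2<\sigma^2$, to have strictly smaller value.
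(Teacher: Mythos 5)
Your proposal is correct and follows essentially the same route as the paper, which justifies Theorem~\ref{th:parameterized} by exactly this reduction: injectivity of the (transposed) Jacobian forces $\nabla_\mu u = 0$ from $\nabla_\zeta L_{s_0} = 0$, and the resulting correspondence with the $(\mu,\sigma^2)$-space lets Proposition~\ref{prop:heat} / Theorem~\ref{th:flow1} be transferred back. You supply the local-surjectivity and backward-in-time details more carefully than the paper, which leaves them implicit in its ``one-to-one correspondence'' remark, but the argument is the same.
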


This result also encourages the use of complex representations, such as neural networks, since the more parameters they contain, the more likely the Jacobian $\frac{\partial \mu(x; \zeta)}{\partial \zeta} \Big|_{x = s}$ is to be non-degenerate for arbitrary $s_0 \in \mathcal{S}$. For the whole gradient estimator $\nabla_\theta \hat{J}$, the component $\nabla_\zeta \hat{J}$ is given by
\begin{align*}
    \nabla_\zeta \hat{J} &= \int_{\mathcal{S}} \rho^{\pi}(s) \nabla_\zeta L_s(\theta) \ \mathrm{d}s  \\
    &= \int_{\mathcal{S}} \rho^{\pi}(s)  \nabla_\mu u(\mu(s; \zeta), t) \ \frac{\partial \mu(x; \zeta)}{\partial \zeta} \Big|_{x = s} \ \mathrm{d}s,
\end{align*}
where $\rho^\pi(s)$ acts as a weighting function on $\nabla_\theta L_s(\theta)$. It should note that the results in Theorem~\ref{th:parameterized} no longer hold true, as the gradient directions at different states may cancel out. Therefore, while we can still have a gradient estimate even if the true policy objective is non-differentiable, there is no guarantee that policy gradient algorithms will converge towards some good solutions in general.

\paragraph{Anisotropic Gaussian distributions.}

Now let us consider the case of diagonal covariance matrices $\Sigma = \diag(r_1, ..., r_m)$ with $r_1, ..., r_m > 0$, which is a common practice in most policy gradient methods \cite{schulman15, schulman17}. Similar to heat equations, we will show that the strong maximum principle still holds for \eqref{eq:aniso}. Let $t \geq 0$ and consider the following parabolic equation
\begin{equation}
\label{eq:aniso}
     u_t - \frac{1}{2} \sum_{i=1}^{m} r_i \ u_{x_i x_i} = 0,  
\end{equation}
which has $u(\mu, t) = L_s(\mu, t \mathbf{r})$, where $\mathbf{r} = [r_1, ..., r_m]^T$. According to the maximum principle of parabolic equations \cite{evans}, the same claims as in Proposition~\ref{prop:heat} hold for \eqref{eq:aniso}. Since any small neighborhood of the point $(\mu, 1)$ can always be embedded into a neighborhood of $(\mu, \mathbf{r})$, we have proved the following result:
\begin{theorem}
\label{th:aniso}
    Let $\pi_\theta$ be a Gaussian policy $\mathcal{N}(\mu,\diag(\mathbf{r}))$ with diagonal covariance matrix where $\mathbf{r} = [r_1, ..., r_m]^T$ is independent of states. 
    Suppose that $\nabla_{(\mu, \mathbf{r})} L_s(\mu, v) = 0$, then exactly one of the following statements holds:
    
    \noindent(I) $r_i = 0$ for some $i = 1, ..., m$; 
    
    \noindent(II) $r_i$ are all positive and $\theta$ is not a strict local maximum of $L_s(\theta)$. 
\end{theorem}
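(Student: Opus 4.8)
The plan is to mirror the structure of Theorem~\ref{th:flow1} but transfer the argument from the isotropic heat equation~\eqref{heat} to the anisotropic parabolic equation~\eqref{eq:aniso}. First I would set up the correspondence explicitly: with $\pi_\theta = \mathcal{N}(\mu, \diag(\mathbf{r}))$ and $\mathbf{r} = [r_1, \dots, r_m]^T$, I would verify by direct computation (differentiating under the integral sign, exactly as in Proposition~\ref{th:equivalence}) that the function $u(\mu, t) := L_s(\mu, t\mathbf{r})$ solves~\eqref{eq:aniso}. The key point is that scaling all the variances jointly by the scalar $t$ turns the anisotropic family into a one-parameter diffusion along the ray $\{t\mathbf{r} : t \geq 0\}$, for which a maximum principle is available.

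Next I would invoke the strong maximum principle for general uniformly parabolic operators (the anisotropic analogue of Proposition~\ref{prop:heat}, citing~\cite{evans}), which applies to~\eqref{eq:aniso} whenever all $r_i > 0$ so that the operator is genuinely parabolic. This gives the dichotomy at the point $(\mu, t=1)$: either $u(\mu, t)$ is constant on a backward neighborhood $\overline{\mathcal{B}(\mu,\delta_1)} \times [0,1]$, or there is a strictly larger value at some nearby $(\mu', t')$ with $t' < 1$. The embedding remark in the paper — that a small neighborhood of $(\mu, 1)$ in the scaled $t$-coordinate sits inside a neighborhood of $(\mu, \mathbf{r})$ in the full variance space — is what lets me translate a strict increase found along the ray into a strict increase in the genuine $(\mu, \mathbf{r})$ parameterization, so that $\theta$ fails to be a strict local maximum of $L_s$. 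I would finish the stochastic branch by noting that $\nabla_{(\mu,\mathbf{r})} L_s = 0$ combined with non-strictness gives case (II), while the degenerate case in which some $r_i = 0$ collapses the corresponding action coordinate to a Dirac mass, which is case (I).

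The main obstacle I anticipate is the degeneracy at the boundary of the variance orthant. When some but not all $r_i$ vanish, the operator in~\eqref{eq:aniso} is only \emph{degenerate} parabolic, and the strong maximum principle in the form cited does not directly apply; moreover the ray-scaling trick $t\mathbf{r}$ behaves singularly, since the $t$-derivative mixes the vanishing and non-vanishing directions. I would therefore handle the statement as a clean two-way split: case (I) simply declares that some $r_i = 0$ (without asserting anything about local maximality there), and case (II) assumes all $r_i > 0$, where the operator is uniformly parabolic and the argument above goes through cleanly. This matches how the theorem is phrased, so the proof need not resolve the genuinely degenerate interior behavior — it only needs to establish the sharp dichotomy, with the full-rank regime carrying the real content via the parabolic maximum principle and the embedding of neighborhoods.
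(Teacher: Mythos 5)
Your proposal matches the paper's own argument: the paper likewise defines $u(\mu,t) = L_s(\mu, t\mathbf{r})$ as a solution of the anisotropic parabolic equation~\eqref{eq:aniso}, invokes the strong maximum principle for parabolic operators from~\cite{evans}, and uses the embedding of a neighborhood of $(\mu,1)$ into a neighborhood of $(\mu,\mathbf{r})$ to rule out a strict local maximum when all $r_i>0$. Your additional remark about the degenerate boundary case (some $r_i=0$) being handled purely by the phrasing of case (I) is a correct and slightly more careful reading of the same argument, not a different route.
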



\section{The Limitations of Mollification}
\label{sec:uncertainty}

In the previous section, we have shown that policy gradient methods are closely related to the heat equation whose solution mollifies the initial condition and hence enables gradient-based algorithms. In this section, we will move to analyze the downsides of mollification effects and describe the fundamental trade-off of policy gradient methods between smoothing and approximating.




\subsection{Convergence to deterministic policies}

In control tasks, the policy gradient is expected to converge towards some deterministic policy, which prompts us to ask whether there exists a smooth gradient flow in a neighborhood of $\mathbb{R}^m \times \{ 0 \}$ in the $(\mu, \sigma^2)$-space. Unfortunately, the answer is \textit{NO} in the general case: as the heat equation makes the solution smoother and smoother as $t$ increases, moving backwards in time will make $u(x, t)$ less smooth, as illustrated in Figure~\ref{fig:heat} (a). In other words, the gradient flow can be smoothly extended to $\mathbb{R}^m \times \{ 0 \}$ only if $\lim_{t \rightarrow 0^+} \int_{\mathbb{R}^m} g(z) \frac{\partial \Phi(x - z, t)}{\partial t} \ dz$ exists for every $x \in \mathbb{R}^m$, which is true for smooth functions and is discussed in Appendix~\ref{sec:appro}. However, the solution can become highly non-smooth when approaching $t = 0$ if $g$ is non-differentiable and even fractal. For instance, the following result shows how complex the fractal landscape can be in the one-dimensional case:

\begin{proposition}
\label{th:landscape}
    [Theorem 1.2, \cite{posey}] Let $\eta \in C(\mathbb{R})$ be nowhere differentiable in $[a, b]$, then the set of its local maximum (minimum) is dense in $[a, b]$.
\end{proposition}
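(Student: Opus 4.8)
The plan is to establish density of the local-maximum set directly from its definition, reducing the claim to the assertion that \emph{every} open subinterval $(c,d) \subseteq [a,b]$ contains an interior local maximum of $\eta$. Density of the local maxima is precisely this statement, and density of the local minima then follows by applying the same argument to $-\eta$, which is again continuous and nowhere differentiable on $[a,b]$ (local minima of $\eta$ are local maxima of $-\eta$). Thus it suffices to produce a single interior local maximum inside an arbitrary $(c,d)$.

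The engine of the proof is that $\eta$ cannot be monotone on any subinterval. Indeed, a monotone function on an interval is differentiable almost everywhere by Lebesgue's theorem on the differentiability of monotone functions, so it would be differentiable at some point of $(c,d)$, contradicting the hypothesis that $\eta$ is nowhere differentiable. Hence on $(c,d)$ the function $\eta$ is neither non-increasing nor non-decreasing.

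From non-monotonicity I extract an interior local maximum as follows. Since $\eta$ is not non-increasing on $(c,d)$, choose $r < s$ in $(c,d)$ with $\eta(r) < \eta(s)$; since $\eta$ is also not non-decreasing on the subinterval $(s,d)$ (note $s < d$ because $r < s$ both lie in the open interval), choose $p < q$ in $(s,d)$ with $\eta(p) > \eta(q)$. Then $r < s \le p < q$ with $\eta(r) < \eta(s)$ and $\eta(p) > \eta(q)$. By continuity $\eta$ attains its maximum over the compact interval $[r,q]$, and this maximum is at least $\eta(s) > \eta(r)$ and at least $\eta(p) > \eta(q)$, so it strictly exceeds both endpoint values. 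It is therefore attained at some interior point $x^{\ast} \in (r,q) \subset (c,d)$, which is consequently a local maximum of $\eta$, as desired.

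The routine parts are the density reduction and the compactness argument yielding the interior maximizer. The main obstacle is the middle step: one must invoke Lebesgue's almost-everywhere differentiability of monotone functions to exclude monotonicity on subintervals, and then arrange the increasing pair $(r,s)$ and the subsequent decreasing pair $(p,q)$ so that the sandwiching interval $[r,q]$ has its maximum strictly in the interior. The only real care required is the bookkeeping that keeps all chosen points strictly inside $(c,d)$; once this is handled, the identical construction applied to $-\eta$ delivers an interior local minimum in every subinterval, which completes the density claim for both local maxima and local minima.
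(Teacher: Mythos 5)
Your proof is correct. The paper itself offers no argument for this proposition --- it is imported verbatim as Theorem~1.2 of the cited reference \cite{posey} --- so your self-contained derivation is necessarily a different route, and a worthwhile one. Your two key steps are both sound: (i) nowhere differentiability rules out monotonicity on every subinterval, via Lebesgue's theorem that a monotone function is differentiable almost everywhere; and (ii) the non-monotonicity on $(c,d)$ and then on $(s,d)$ produces points $r<s<p<q$ with $\eta(r)<\eta(s)$ and $\eta(p)>\eta(q)$, so the maximum of $\eta$ on the compact interval $[r,q]$ strictly exceeds both endpoint values and must be attained at an interior point, which is then a local maximum inside $(c,d)$. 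The reduction of density to ``every open subinterval contains a local maximum'' and the passage to $-\eta$ for minima are both handled correctly, and your bookkeeping keeps all chosen points inside $(c,d)$. The only mild observation is that your argument never uses the full strength of nowhere differentiability --- it only needs that $\eta$ is not monotone on any subinterval --- so you have in fact proved a slightly more general statement than the one cited; this is a feature, not a gap. What the paper's citation buys is brevity; what your argument buys is transparency about exactly which property of fractal landscapes (absence of monotone stretches) forces the local extrema to be dense.
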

It suggests that in the case of fractal landscapes, there are so many local maximum points that it is challenging to determine where policy gradient converges towards. The gradient flow also loses stability as the policy approaches a deterministic limit, and even a small perturbation may result in a totally different trajectory in the policy space. 

In fact, reaching the initial condition $u(x, 0)$ from a future time $t = T > 0$ involves solving the backward heat equation \cite{renardy}:
\begin{equation}  \label{backward}
    \begin{cases}
      & 2 u_t + \Delta u = 0, \quad (x, t) \in \mathbb{R}^m \times (-\infty, T)\\
      & u = g_T, \quad (x, t) \in \mathbb{R}^m \times \{ T \}.
    \end{cases} \\
\end{equation}
where $g_T = u(x, T)$ is the solution at time $t = T$. It is well-known in PDE theory that \eqref{backward} is {\em ill-posed} \cite{kabanikhin}. The physical intuition is that we cannot reverse diffusion, which is an information-losing process over time. Indeed, a PDE system is called ill-posed if it does not have a solution, or the solution is not unique, or if the solution does not change continuously with respect to the initial (terminal) conditions.


To better understand the ill-posedness of \eqref{backward}, the following theorem states that an arbitrarily small perturbation of the terminal condition $g_T$ can preclude the existence of the solution:

\begin{theorem}[Ill-posedness around deterministic policies]
\label{th:nobound}
    For any $\sigma > 0$, terminal condition $g_{\sigma^2} \in L^2(\mathbb{R}^m)$ and $\epsilon > 0$, there exists $g'_{\sigma^2} \in L^2(\mathbb{R}^m)$ with $\| g_{\sigma^2} - g'_{\sigma^2} \|_{L^2(\mathbb{R}^m)} < \epsilon$ and the solution of \eqref{backward} does not exist for the terminal condition $u(\mu, \sigma^2) = g'_{\sigma^2}$.
\end{theorem}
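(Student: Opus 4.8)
The plan is to pass to the frequency domain via the Fourier transform, where the heat evolution diagonalizes and the obstruction to solvability becomes transparent. Writing $\hat f$ for the Fourier transform of $f$, the forward flow \eqref{heat} acts on each frequency $\xi \in \mathbb{R}^m$ by the multiplier $e^{-|\xi|^2 t/2}$, so that recovering the deterministic end $\sigma^2 = 0$ of the flow from terminal data $g_{\sigma^2} = u(\cdot,\sigma^2)$ forces
\[
\hat u(\xi, 0) = \hat g_{\sigma^2}(\xi)\, e^{|\xi|^2 \sigma^2 / 2}.
\]
Hence a backward solution of \eqref{backward} reaching $\sigma^2 = 0$ as a genuine ($L^2$) function can exist only if the Gaussian-weighted transform $\xi \mapsto \hat g_{\sigma^2}(\xi)\, e^{|\xi|^2\sigma^2/2}$ lies in $L^2(\mathbb{R}^m)$. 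The first step is to record this necessary condition rigorously, using Plancherel and the uniqueness of the Fourier representation of any admissible solution.

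Second, I would exhibit an explicit small perturbation that destroys this condition. Take $h \in L^2(\mathbb{R}^m)$ defined through its transform $\hat h(\xi) = \delta\, e^{-|\xi|^2 \sigma^2/2}$, for a scalar $\delta$ to be fixed. By Plancherel, $\|h\|_{L^2}^2 = \delta^2 \int_{\mathbb{R}^m} e^{-|\xi|^2\sigma^2}\,\mathrm{d}\xi = \delta^2 (\pi/\sigma^2)^{m/2}$, which is finite and can be made smaller than $\epsilon$ by choosing $\delta$ small. Its weighted transform, however, is the constant $\hat h(\xi)e^{|\xi|^2\sigma^2/2} = \delta$, which is \emph{not} square-integrable over $\mathbb{R}^m$. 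In physical space $h$ is merely a narrow Gaussian, yet its backward image is (a multiple of) a Dirac mass, the prototypical object that cannot be realized as an $L^2$ function.

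Third, set $g'_{\sigma^2} = g_{\sigma^2} + h$ and verify non-existence through its weighted transform $\hat g_{\sigma^2}(\xi)e^{|\xi|^2\sigma^2/2} + \delta$. If $\hat g_{\sigma^2}e^{|\xi|^2\sigma^2/2} \in L^2$, then adding the non-integrable constant $\delta$ takes it out of $L^2$, so no solution exists. If instead $\hat g_{\sigma^2}e^{|\xi|^2\sigma^2/2} \notin L^2$ already, I would note that $\hat g_{\sigma^2}e^{|\xi|^2\sigma^2/2} + \delta$ can belong to $L^2$ for at most one value of $\delta$, since two such values would differ by a nonzero constant, which is impossible in $L^2(\mathbb{R}^m)$; choosing $\delta$ small, nonzero, and distinct from that single excluded value again yields a $g'_{\sigma^2}$ whose backward solution fails to exist. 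In either case $\|g_{\sigma^2}-g'_{\sigma^2}\|_{L^2} = \|h\|_{L^2} < \epsilon$, as required.

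The step I expect to be the main obstacle is not the estimate but pinning down the \emph{solution concept} so that ``does not exist'' is unambiguous. The weighted transform $\delta$ is a perfectly legitimate tempered distribution (its inverse transform is a Dirac mass), so the non-existence claim must be made relative to a class of admissible solutions, for instance solutions $u$ with $u(\cdot,t)\in L^2(\mathbb{R}^m)$ for each $t \in [0,\sigma^2]$, or classical bounded solutions. I would therefore fix this class explicitly at the outset and show that any solution in it is uniquely determined by the frequency-domain formula above, so that failure of the weighted transform to lie in $L^2$ genuinely precludes existence. A secondary point to handle carefully is the Fourier-convention constant and the sign of the multiplier inherited from \eqref{heat} and \eqref{backward}; these only rescale $\sigma^2$ and $\delta$ and do not affect the argument.
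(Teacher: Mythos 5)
Your argument is correct, but it takes a genuinely different route from the paper's. The paper reduces to the terminal condition $g_T \equiv 0$ by linearity and perturbs in \emph{physical} space by $g'_T(x) = \tfrac{\epsilon}{2}e^{-|x|/2}$: substituting this into the formal backward representation with kernel $(2\pi(T-t))^{-1/2}e^{+|x-z|^2/(2(T-t))}$, the quadratic growth in the exponent overwhelms the linear decay of $g'_T$ and the integral diverges for every $x$ and every $t<T$, so no solution exists. You instead diagonalize the flow on the Fourier side and perturb by a Gaussian $h$ with $\hat h(\xi)=\delta e^{-|\xi|^2\sigma^2/2}$, whose backward image at $t=0$ is the non-square-integrable constant $\delta$. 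Both proofs exploit the same obstruction (the backward multiplier $e^{+|\xi|^2(T-t)/2}$ amplifies high frequencies), but yours is stated entirely inside a precise $L^2$ solution class via Plancherel, whereas the paper's kernel formula is an analytic continuation of the heat kernel whose status as ``the solution'' is left informal; your ``at most one admissible $\delta$'' observation also handles explicitly the case where $g_{\sigma^2}$ is itself not backward-solvable, which the paper absorbs into its linearity reduction. What the paper's choice of perturbation buys is a stronger failure: its $g'_T$ destroys existence at \emph{every} $t<T$, while your $h$ remains backward-solvable on $(0,\sigma^2]$ and only fails at $t\le 0$ --- since \eqref{backward} is posed on all of $(-\infty,T)$ this still precludes a solution, but you should state that explicitly. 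The one piece of genuine remaining work is exactly the one you flag: fixing the admissible solution class (e.g.\ $u(\cdot,t)\in L^2(\mathbb{R}^m)$ for each $t$) and proving the frequency-domain representation is forced within it, so that ``does not exist'' excludes distributional escapes such as the Dirac mass; the paper leaves this same point implicit.
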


The complete proof can found in Appendix~\ref{app:proof}. This result suggests that policy gradient methods are inherently ill-posed when reducing the variance. In this sense, even a small perturbation of the current state may significantly change the limit to which it converges.

\begin{figure*}[h!]
\centering

\subfigure[]{\includegraphics[width=0.24\textwidth]{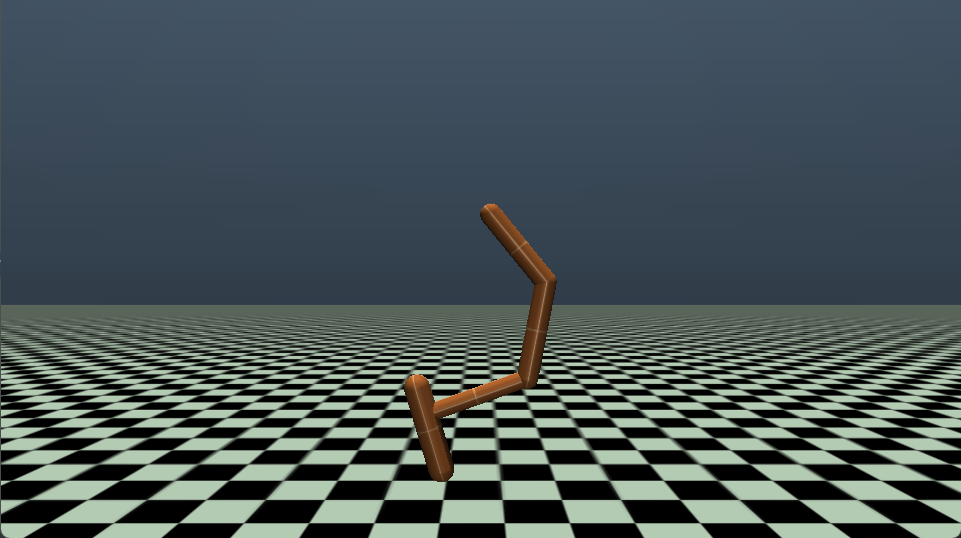}}
\subfigure[]{\includegraphics[width=0.24\textwidth]{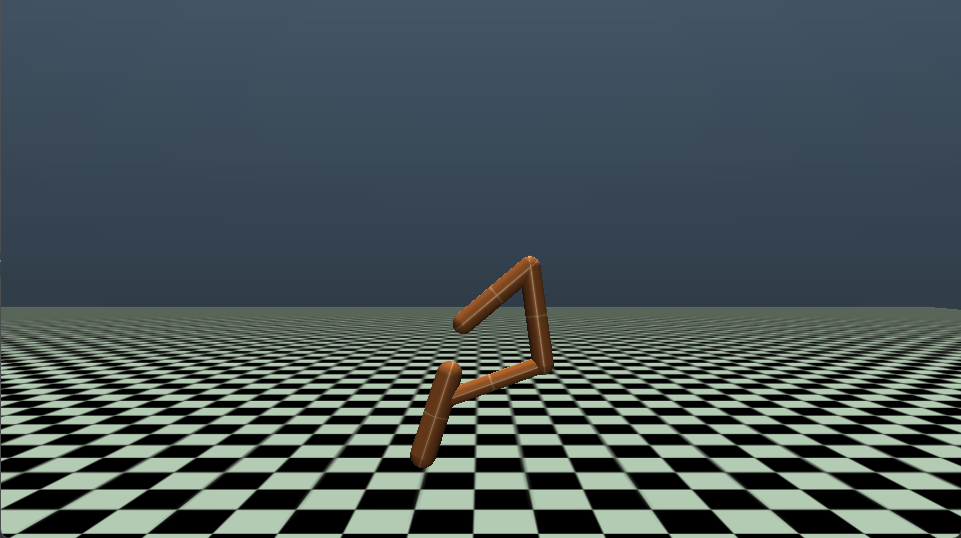}}
\subfigure[]{\includegraphics[width=0.24\textwidth]{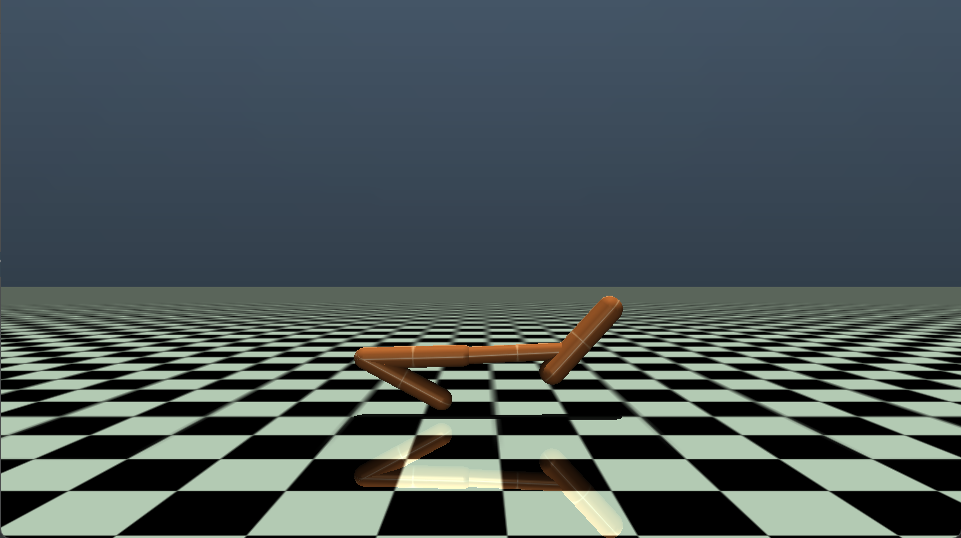}}
\subfigure[]{\includegraphics[width=0.24\textwidth]{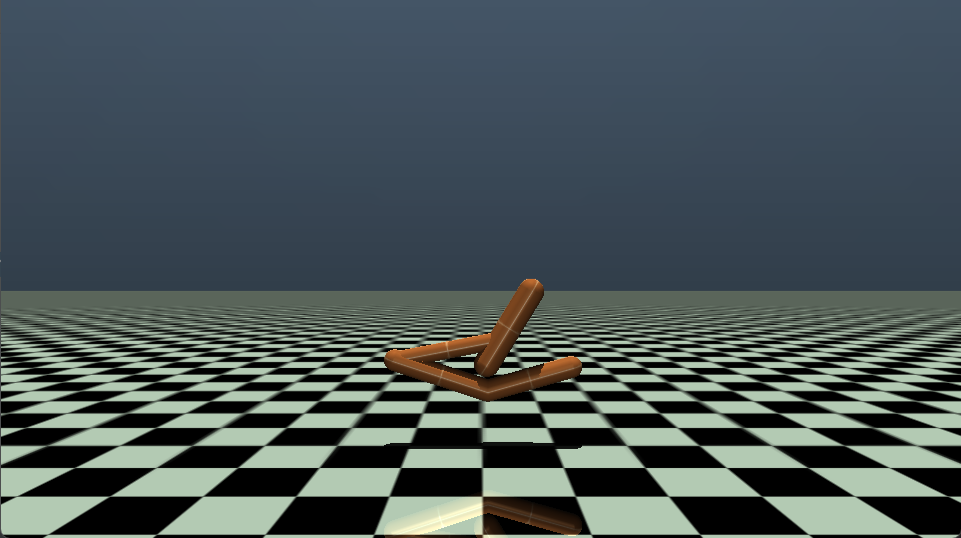}}

\caption{Hopper stand: the hopper failed to learn standing when $\sigma = 0.005$.}

\end{figure*}

\begin{figure*}[h!]
\centering

\subfigure[]{\includegraphics[width=0.24\textwidth]{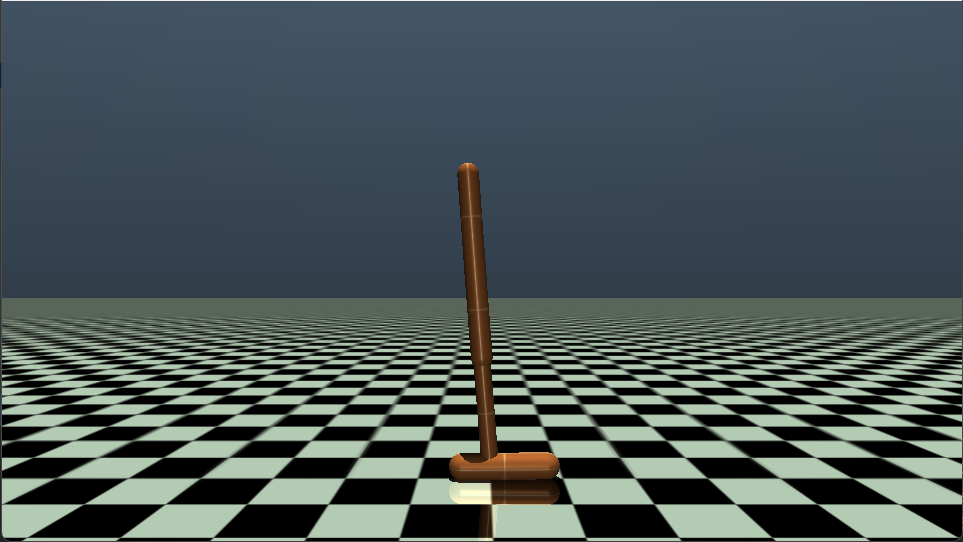}}
\subfigure[]{\includegraphics[width=0.24\textwidth]{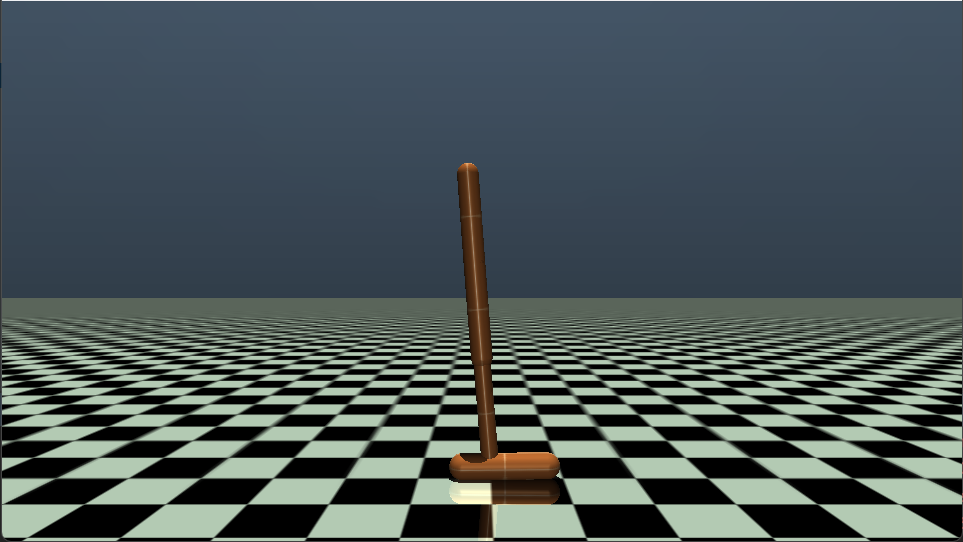}}
\subfigure[]{\includegraphics[width=0.24\textwidth]{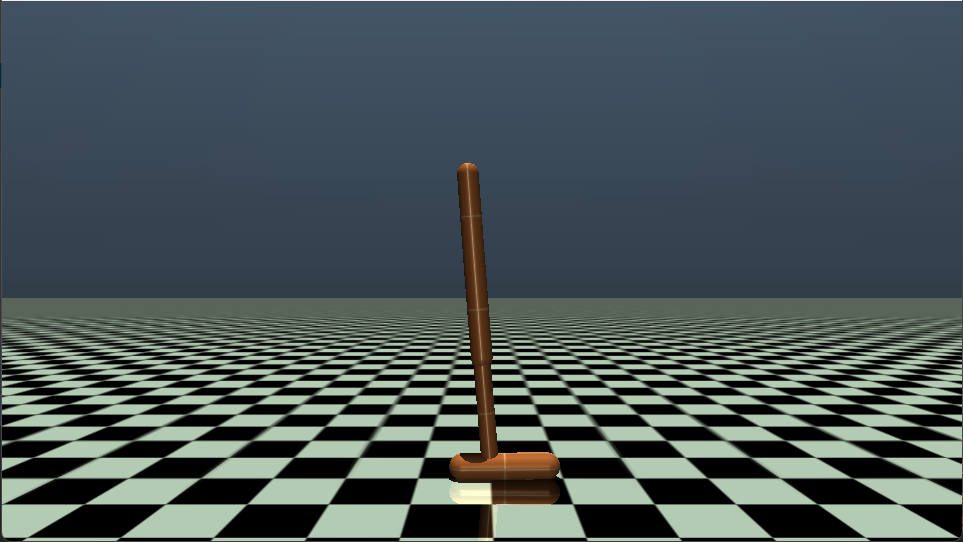}}
\subfigure[]{\includegraphics[width=0.24\textwidth]{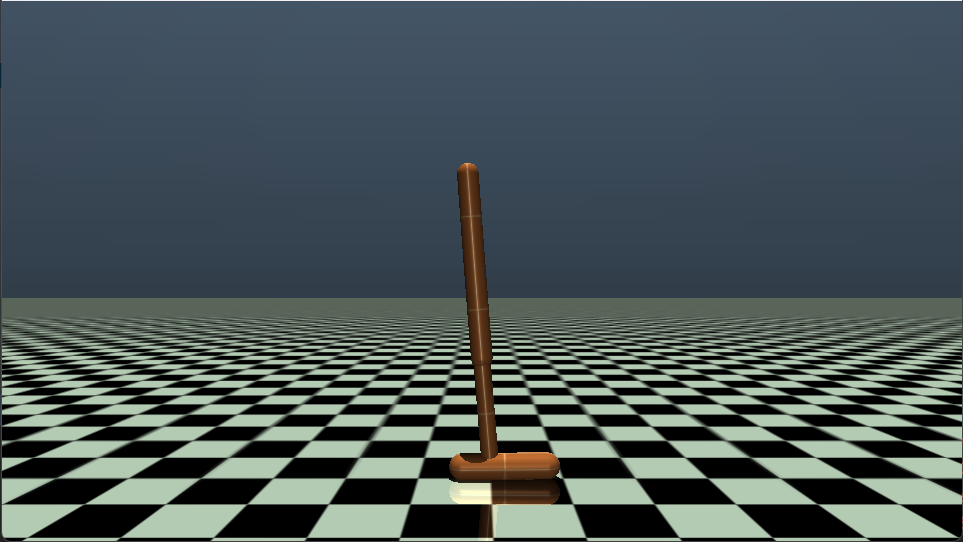}}

\caption{Hopper stand: the hopper successfully learned to stand when $\sigma = 0.05$.}

\end{figure*}

\paragraph{Existence of optimal variance.}


It has shown that as the variance $\sigma^2$ decreases, the mollified surrogate objective becomes less smooth and eventually converges to the fractal landscape of deterministic policy. On the other hand, note that the term $Q^{\pi}(s, a) \pi_\theta (a|s)$ is a random variable with $a \sim \pi_\theta(\cdot|s)$, the variance of $Q^{\pi}(s, a) \pi_\theta (a|s)$ will grows as well and the training process becomes more random as $\sigma$ increases. For instance, consider the gradient-ascent algorithm $X_{k + 1} = X_k + \delta G_k$ where $G_k$ is the estimated gradient and $\delta > 0$ is the stepsize. Suppose that we just obtained $X_k$, the conditional variance of the next parameter is given as $Var(X_{k+1} | X_k) = \delta^2 \ Var(G_k)$ (here we used $Var(X_k | X_k) = 0$), which means that the uncertainty in the gradient will propagate into the optimization parameters, and eventually affect the stability of the training curve. Combining these two facts together hints that there should exist an optimal value of $\sigma$ for the policy gradient and we conclude with the following assertion:
\begin{remark}
    For chaotic MDPs where the optimization landscapes are fractal, there exists an optimal variance $\sigma^*$ for the Gaussian policy that minimizes the uncertainty in training.
\end{remark}

\subsection{The uncertainty principle}
\label{sec:harmonic} 

The uncertainty principle, also known as the Heisenberg uncertainty principle, is a fundamental underlying law in harmonic analysis. It states that a function and its Fourier transform cannot be localized at the origin simultaneously. First, let us present the definition of Fourier transform:
\begin{definition}[Fourier transform] Let $\phi \in \mathcal{S}(\mathbb{R}^d)$ belong to the space of rapidly decreasing functions on $\mathbb{R}^d$ that consists of all indefinitely differentiable functions $f$ on $\mathbb{R}^d$ such that $\sup |x^\alpha (\frac{\partial}{\partial x})^\beta f(x)| < \infty$ for all multi-index $\alpha$ and $\beta$, then the Fourier transform of $\phi$ is defined as
    $$\mathcal{F}(\xi) = \int_{\mathbb{R}^d} \phi(x) e^{-2 \pi i \langle x, \xi \rangle} \ dx, \quad \xi \in \mathbb{R}^d,$$
    where $\langle x, \xi \rangle$ is the inner product of $x$ and $\xi$.
\end{definition}
An important property connecting Fourier transform and convolution is that $\mathcal{F}(\phi_1 * \phi_2) = \mathcal{F}(\phi_1) \mathcal{F}(\phi_2)$ for any $\phi_1, \phi_2 \in \mathcal{S}(\mathbb{R}^d)$. As mentioned in Section~\ref{sec:heat}, $\hat{\phi}$ describes the frequency and thus the non-smoothness of $\phi$. The rigorous formulation of the uncertainty principle is given as follows:


\begin{proposition}
    [Uncertainty Principle~\cite{stein}] If $\phi \in \mathcal{S}(\mathbb{R}^d)$ satisfies $\int_{\mathbb{R}^d} |\phi(x)|^2 \ \mathrm{d}x = 1$, then 
    \begin{equation}  \label{uncertainty}
        (\int_{\mathbb{R}^d} |x|^2 |\phi(x)|^2 \ \mathrm{d}x) (\int_{\mathbb{R}^d} |\xi|^2 |\hat{\phi}(\xi)|^2 \ \mathrm{d}\xi) \geq \frac{d^2}{16 \pi^2},
    \end{equation}
    where $\hat{\phi}$ is the Fourier transform of $\phi$ and $\mathcal{S}(\mathbb{R}^d)$ denotes the space of rapid decreasing functions. The equality holds when $\phi$ is a Gaussian function.
\end{proposition}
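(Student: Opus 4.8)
The plan is to prove the inequality by combining three classical tools that are all available for Schwartz functions: the differentiation rule for the Fourier transform, Plancherel's theorem, and the Cauchy--Schwarz inequality. The starting observation is that under the convention $\mathcal{F}(\xi)=\int \phi(x)e^{-2\pi i\langle x,\xi\rangle}\,dx$ adopted above, differentiation becomes multiplication: $\widehat{\partial_{x_j}\phi}(\xi)=2\pi i\,\xi_j\hat\phi(\xi)$. Applying Plancherel's identity $\int|\hat\psi|^2=\int|\psi|^2$ to $\psi=\partial_{x_j}\phi$ and summing over $j=1,\dots,d$ gives
$$\int_{\mathbb{R}^d}|\nabla\phi(x)|^2\,dx=4\pi^2\int_{\mathbb{R}^d}|\xi|^2|\hat\phi(\xi)|^2\,d\xi.$$
This rewrites the second factor of the left-hand side, up to the constant $4\pi^2$, purely in terms of $\phi$ and its gradient, so the whole inequality reduces to a statement about $\phi$ alone.

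The key step is an integration-by-parts identity that produces the dimension $d$. I would compute $\sum_{j=1}^d\int_{\mathbb{R}^d} x_j\,\partial_{x_j}|\phi|^2\,dx$. Integrating each summand by parts in the $x_j$ variable — the boundary terms vanish because $\phi\in\mathcal{S}(\mathbb{R}^d)$ decays faster than any polynomial — turns $\int x_j\,\partial_{x_j}|\phi|^2\,dx$ into $-\int|\phi|^2\,dx$ since $\partial_{x_j}x_j=1$; summing the $d$ identical contributions and using the normalization $\int|\phi|^2=1$ yields $-d$. On the other hand, $\partial_{x_j}|\phi|^2=2\,\mathrm{Re}(\bar\phi\,\partial_{x_j}\phi)$, so the same quantity equals $2\,\mathrm{Re}\int \bar\phi\,(x\cdot\nabla\phi)\,dx$. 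Combining the two evaluations gives
$$d=-2\,\mathrm{Re}\int_{\mathbb{R}^d}\bar\phi\,(x\cdot\nabla\phi)\,dx\le 2\int_{\mathbb{R}^d}|x|\,|\phi|\,|\nabla\phi|\,dx,$$
where the inequality uses $|\mathrm{Re}\,z|\le|z|$ together with the pointwise bound $|x\cdot\nabla\phi|\le|x|\,|\nabla\phi|$.

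Finally I would apply the Cauchy--Schwarz inequality in $L^2$ to the last integral, bounding it by $(\int|x|^2|\phi|^2)^{1/2}(\int|\nabla\phi|^2)^{1/2}$, then substitute the Plancherel identity from the first step and square both sides; this produces $d^2\le 16\pi^2(\int|x|^2|\phi|^2)(\int|\xi|^2|\hat\phi|^2)$, which is the claim. For the equality case, tracing the argument backward shows that both the scalar and the $L^2$ Cauchy--Schwarz steps must hold with equality simultaneously, which forces $\nabla\phi(x)=c\,x\,\phi(x)$ for a real constant $c$; solving this first-order equation gives $\phi(x)=Ce^{c|x|^2/2}$, which lies in $L^2(\mathbb{R}^d)$ only when $c<0$, i.e. precisely the Gaussians. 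I expect the main obstacle to be bookkeeping rather than conceptual: keeping the $2\pi$ factors consistent with the stated Fourier convention throughout, and carefully justifying that every boundary term arising from integration by parts vanishes using only the rapid decay of $\phi$.
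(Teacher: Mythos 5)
Your argument is correct and complete: the identity $\int|\nabla\phi|^2 = 4\pi^2\int|\xi|^2|\hat\phi(\xi)|^2\,d\xi$ via Plancherel, the integration-by-parts identity $\sum_j\int x_j\,\partial_{x_j}|\phi|^2\,dx = -d$, and the two applications of Cauchy--Schwarz assemble exactly into the stated bound with the correct constant $d^2/(16\pi^2)$ under the $e^{-2\pi i\langle x,\xi\rangle}$ convention, and your ODE $\nabla\phi = c\,x\,\phi$ with $c<0$ correctly identifies the Gaussian extremizers. The paper does not prove this proposition itself --- it is quoted from the cited reference --- and your proof is precisely the standard one given there, so there is nothing to reconcile; the only minor remark is that for complex-valued $\phi$ the chain of equality conditions also permits a constant phase factor in front of the Gaussian, which does not affect the statement as written.
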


To see how it is related to the mollification, let us consider the probability density function $\phi(x) \in \mathcal{S}(\mathbb{R}^d)$. Let $g(a) = Q^\pi(s_0, a)$ for simplicity, the convolution $g * \phi$ is close to $g$ when $\phi$ concentrates at $x = 0$, i.e., the quantity $\int_{\mathbb{R}^d} |x|^2 |\phi(x)|^2 \ \mathrm{d}x$ is small. On the other hand, $g * \phi$ is smooth when $\hat{\phi}$ concentrates at $\xi = 0$, which is equivalent to having small $\int_{\mathbb{R}^d} |\xi|^2 |\hat{\phi}(\xi)|^2 \ \mathrm{d}\xi$. However, the uncertainty principle prohibits us to achieve these two things at the same time, which leads to a fundamental limitation: when the policy gradient mollifies and explores the landscape, it inevitably increases the risk of over-smoothing in the meantime. In particular, if the region of attraction of the optimal policy is small, the Gaussian kernel in policy gradient may completely eliminate that region and the problem will become unsolvable (see \emph{Planar quadrotor balance} experiment in Section~\ref{sec:experiment} for details). Therefore, policy gradient methods are inherently limited when reducing the variance to zero, fundamentally challenging their applications in control and robotics.

\section{Experiments}
\label{sec:experiment}

In this part, we will apply the theory established in previous sections to explain when policy gradient methods can/cannot solve certain control problems. The controls are linearly parameterized as $u = -Ks + b$ in the planar quadrotor task, and $u = -Ks$ in the double pendulum example. The optimal solution to those problems are obtained using the LQR method in optimal control. In the hopper stand task, we use a 2-layer neural network as controller. More details of experiments are provided in Appendix~\ref{app:detail} and \ref{sec:behavior}.

\paragraph{Hopper stand.}

We begin with a standard example in the OpenAI GYM documentation \citep{brockman}. As shown in Figure~\ref{fig:hopper}(e), the policy landscape for a randomly initialized policy is fractal due to the chaoticness in the underlying dynamics. To penalize any deviation from the balanced standing position, we apply a negative quadratic cost so that the total reward is always non-positive. Here we conduct 4 sets of parallel experiments with different standard deviations for the Gaussian policy, and each set is repeated across five random seeds. 

In Figure~\ref{fig:hopper} (a), the resulted variance of the return sequence $\{J(\theta_k) \}_{k = 0}^{100}$ is even greater when $\sigma = 0.005$. This is attributed to the weak mollification effect, resulting in an optimization landscape very close to the fractal landscape generated by the deterministic policy. 
It shows that both $\sigma = 0.05$ and $\sigma = 0.5$ achieve nice and stable performance than the other two standard deviations, which agrees with the statement in Section~\ref{sec:uncertainty} that the variance of stochastic policies cannot be too large or too small. The behaviors of the hopper in simulation are presented in Appendix~\ref{sec:behavior}, and we can observe that the policy gradient with a standard deviation of $\sigma = 0.05$ discovers a policy that successfully stabilizes the hopper.

In Figure~\ref{fig:hopper} (f), after 100 epochs, the trained policy entered into a smooth region which corresponds to a stable dynamics of the hopper, compared to the fractal landscape around the initial policy $\theta_0$ in (e). It demonstrates that policy gradient is capable of guiding chaotic environments towards stability in some tasks with a proper sampling variance. 


\begin{figure}[h!]
\centering

\subfigure[]{\includegraphics[width=0.23\textwidth]{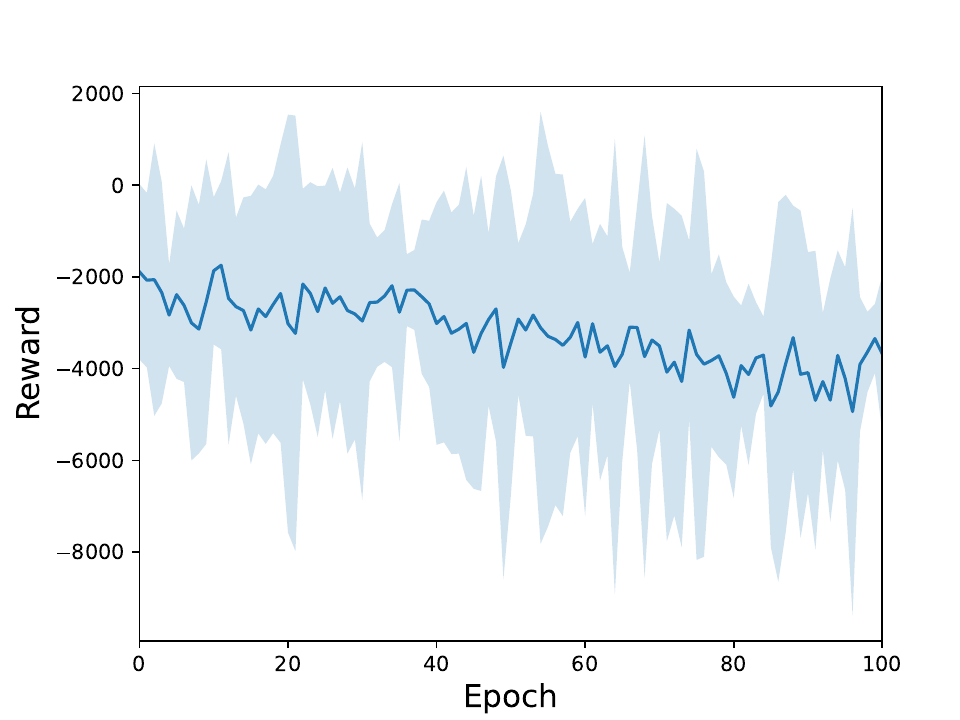}}
\subfigure[]{\includegraphics[width=0.23\textwidth]{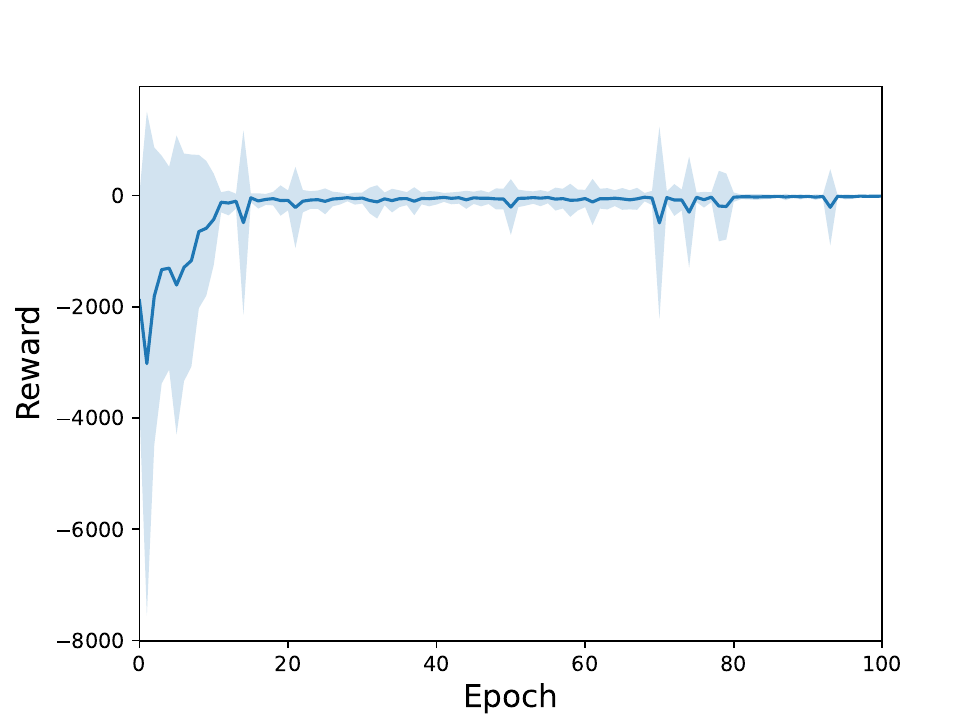}}
\subfigure[]{\includegraphics[width=0.23\textwidth]{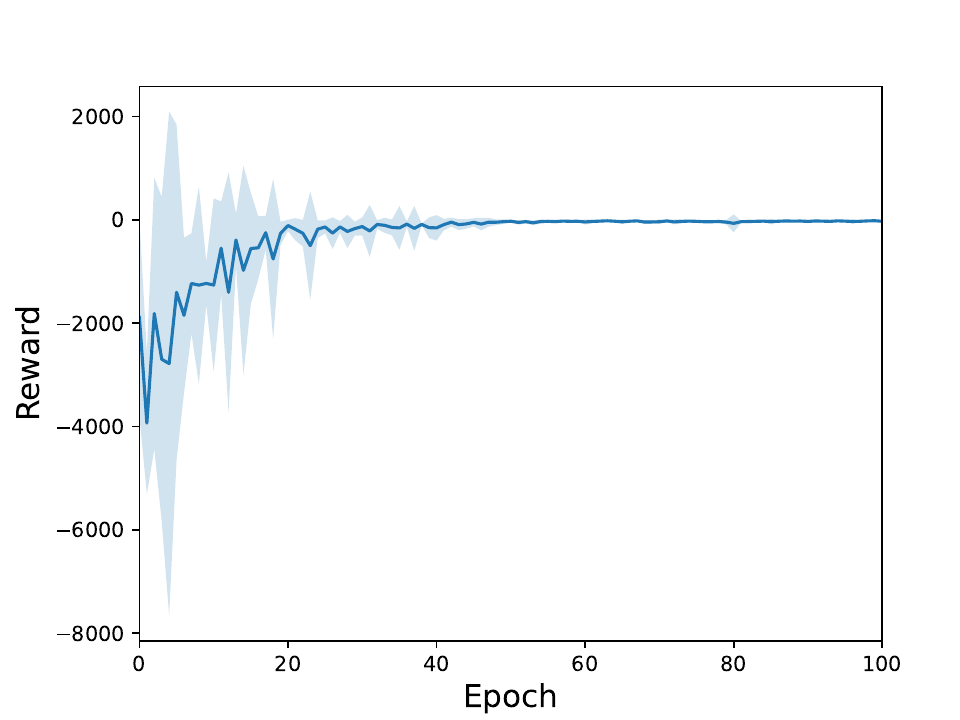}}
\subfigure[]{\includegraphics[width=0.23\textwidth]{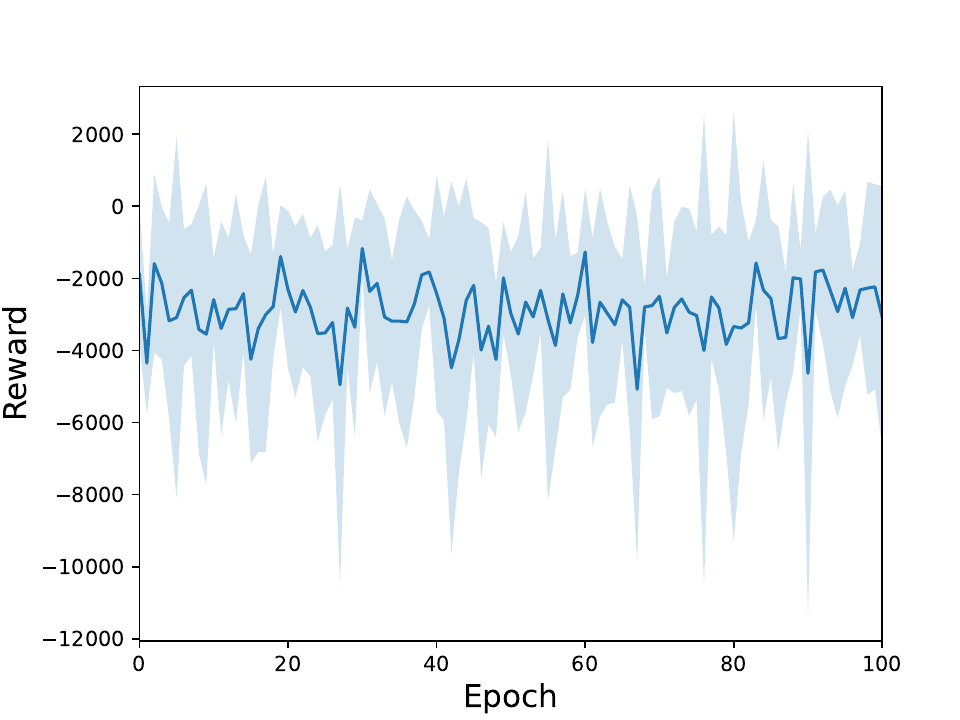}}

\subfigure[]{\includegraphics[width=0.23\textwidth]{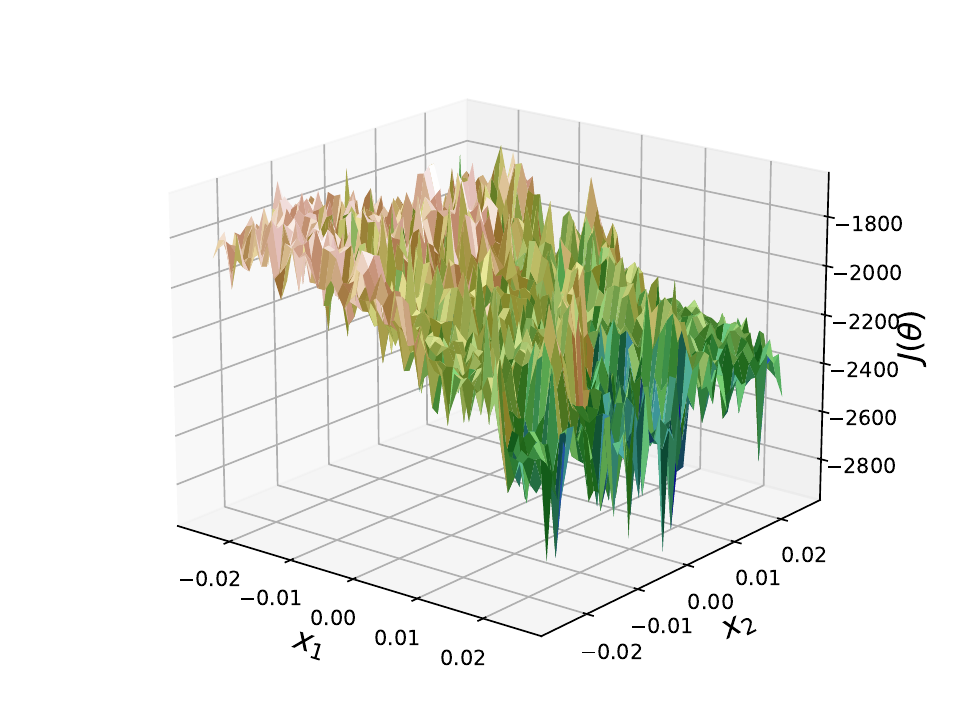}}
\subfigure[]{\includegraphics[width=0.23\textwidth]{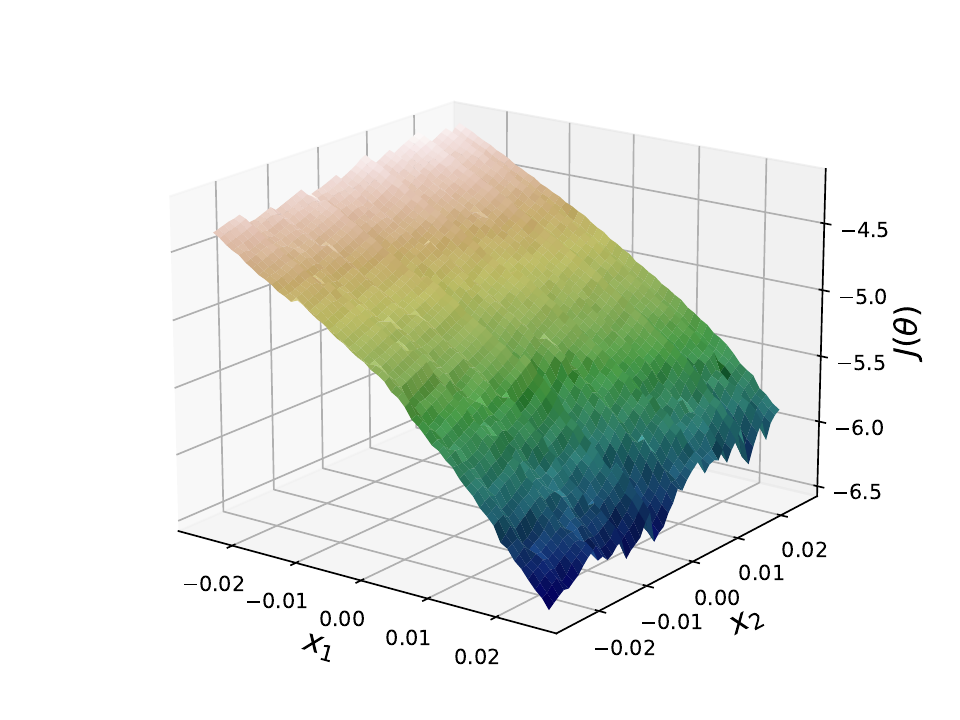}}

\caption{Hopper: Training curves with different standard deviations: (a) $\sigma = 0.005$; (b) $\sigma = 0.05$; (c) $\sigma = 0.5$; (d) $\sigma = 5$. The $x$-axis is the number of epochs and the $y$-axis is the total reward. We can see that training is successful when the variance is neither too small nor too large ($\sigma = 0.05$ and $0.5$); (e) policy landscapes around the initial policy $\theta_0$; (f) policy landscapes around the final policy $\theta_{100}$. When generating the landscapes, $\sigma = 0.05$ is employed and we use the deterministic version of the policy for plotting.}
\label{fig:hopper}

\end{figure}

\paragraph{Double pendulum stabilization.}

It is well-known that the double pendulum system can exhibit chaotic behavior which leads to a fractal optimization landscape. In this experiment, we adopt the dynamics from \cite{chang} and the initial policy $\theta_0 = [K_0, b_0]^T$ is selected to be close to the stabilizing region, but still generates an unstable trajectory. The initial state $s_0 = [-0.2, 0.2, 0, 0]^T$ and the reward function is quadratic. As depicted in Figure~\ref{fig:dp} (a), the trained policy successfully stabilizes the system towards the origin (upright position) after 50 epochs. As in Figure~\ref{fig:dp} (b), the $Q$-function landscape of the trained policy $\theta_{50}$ is smooth, indicating that the system is well-behaved. This example suggests a scenario where policy gradient methods can exhibit its full strength: fine-tuning an initial policy that is close enough to the desired policies.

\begin{figure}[h!]
\centering

\subfigure[]{\includegraphics[width=0.23\textwidth]{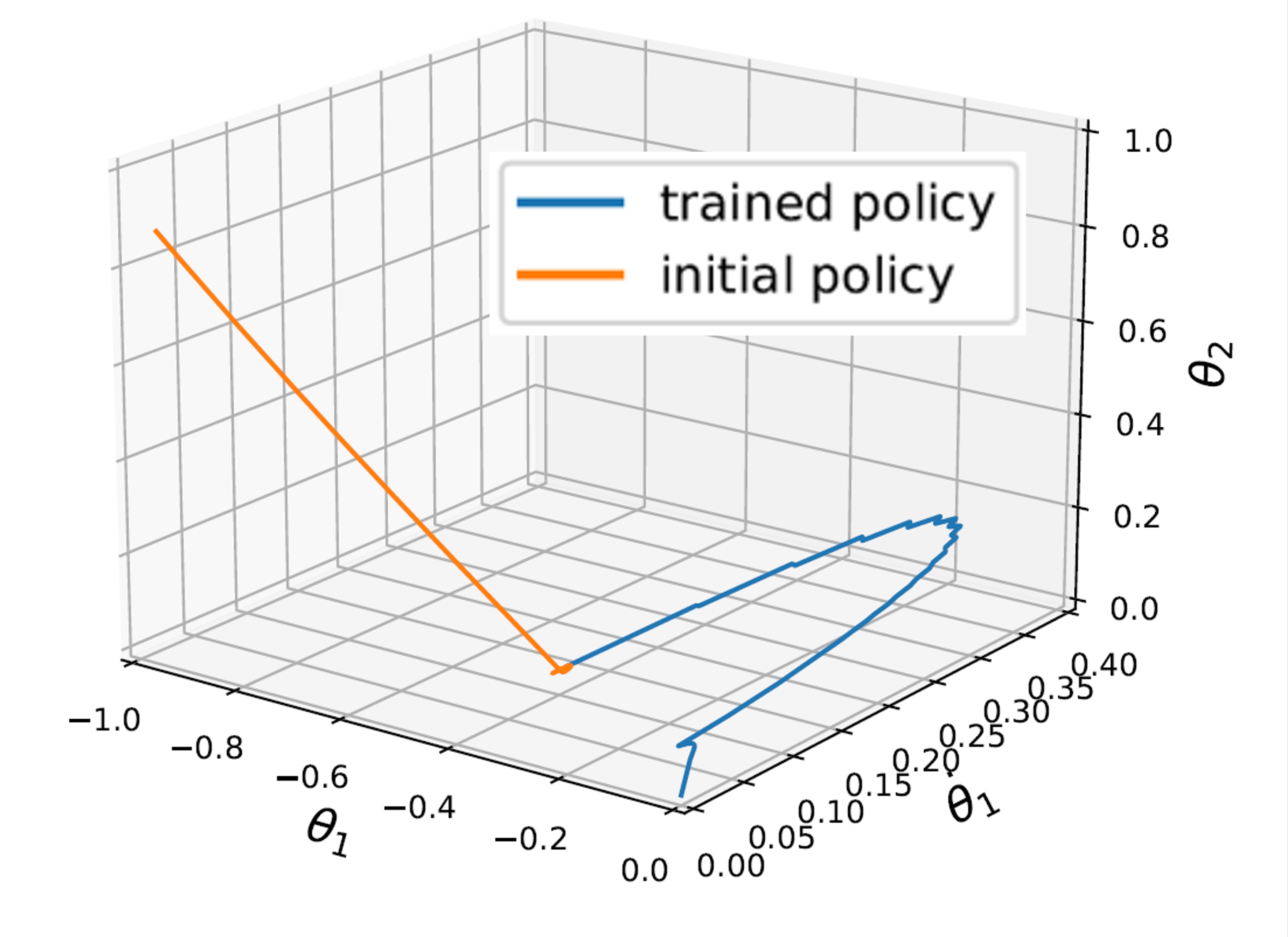}}
\subfigure[]{\includegraphics[width=0.23\textwidth]{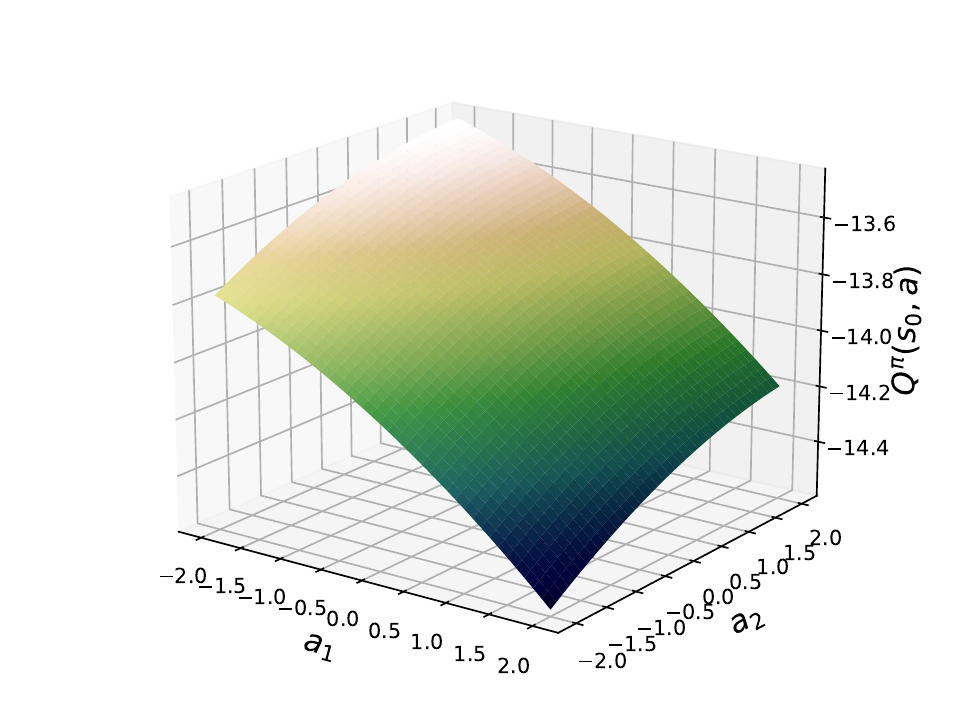}}

\caption{Double pendulum: (a) the trained policy successfully stabilizes the double pendulum system towards the origin, while the initial policy does not; (b) the  $Q$-landscape $Q^{\pi}(s_0, a)$ of the trained policy in action space.}
\label{fig:dp}

\end{figure}

\paragraph{Planar quadrotor balance.}

Now let us evaluate the hardness of the balance task of the planar quadrotor system \cite{tedrake}, and we will understand it from the view of mollification effects. Consider the control system where the two control inputs $u_1, u_2$ are the propelling force provided by motors. The main difficulty of balancing the quadrotor horizontally via RL algorithms is that the policy must learn to generate equal control inputs at two propellers, otherwise the quadrotor will immediately flip and fall. Therefore, even a slight deviation from the balanced control results in a significant divergence in the trajectory, which means that the landscape around the optimal solution is highly non-smooth as shown in Figure~\ref{fig:drone} (a) despite the dynamics of quadrotor is \emph{not chaotic}. In particular, the objective landscape has a spike at the origin (the optimal policy), which behaves like a high-frequency signal in contrast to the surrounding landscape.

\begin{figure}[h!]
\centering

\subfigure[]{\includegraphics[width=0.23\textwidth]{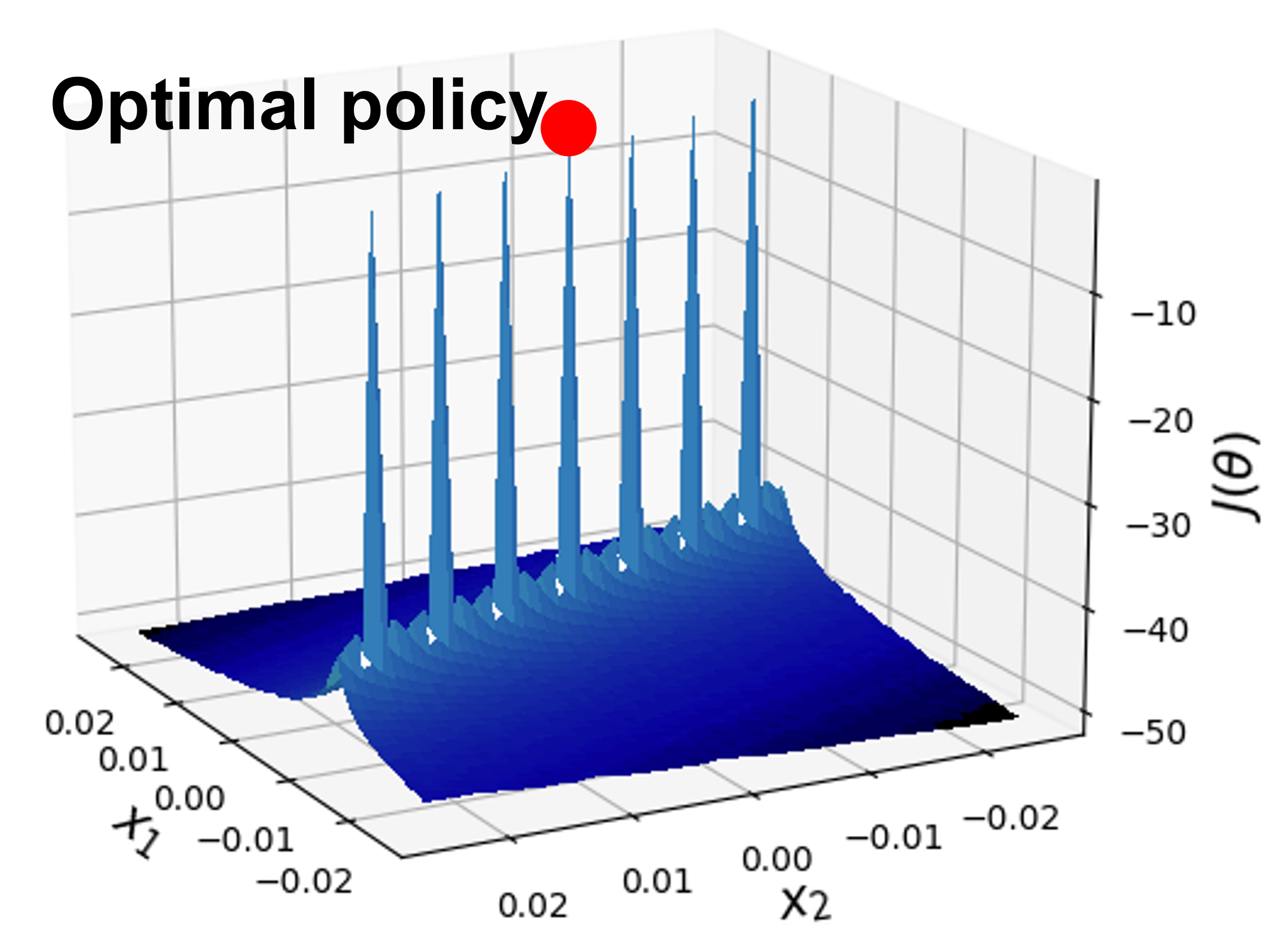}}
\subfigure[]{\includegraphics[width=0.23\textwidth]{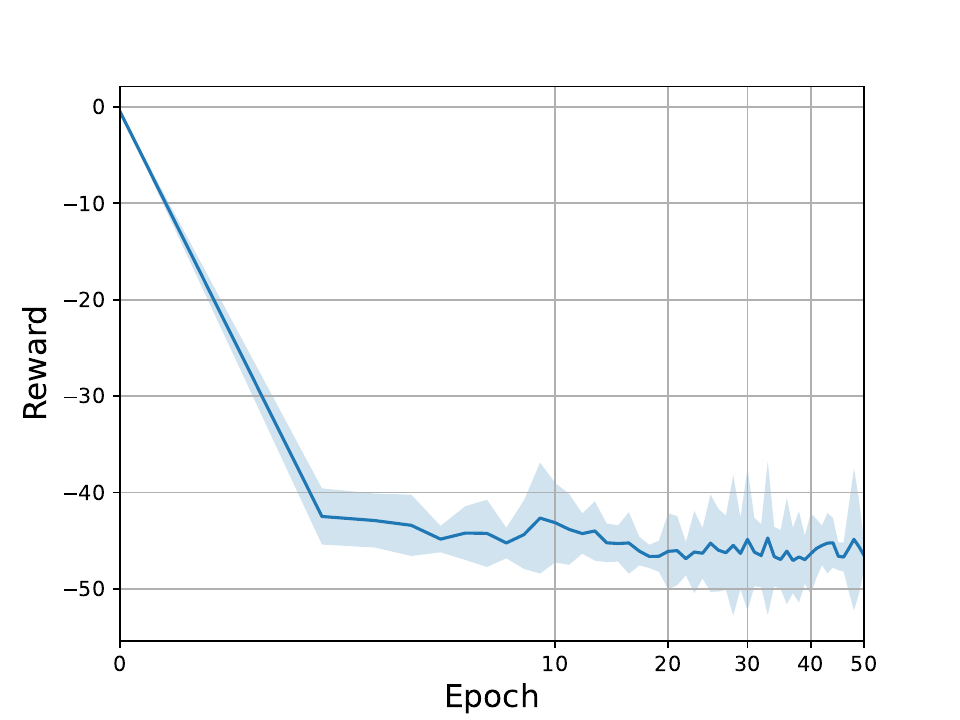}}

\caption{Planar quadrotor: (a) the policy landscape around the optimal policy $\theta^*$ is a spike, which is then easily eliminated by the Gaussian kernel in the policy gradient; (b) the training curve immediately falls after one epoch even the initial policy has already been optimal.}
\label{fig:drone}

\end{figure}

As a consequence, even if we run the policy gradient algorithm with the initial policy $\theta_0 = \theta^*$ (the optimal policy), the training curve immediately drops after one epoch and never returns to the optimal policy $\theta^*$ as shown in Figure~\ref{fig:drone} (b). This phenomenon is explained by the theoretical results in Section~\ref{sec:uncertainty} that the optimal solution is a high-frequency noise and is filtered by the Gaussian kernel in policy gradient. Traditional control methods presume that the torque applied to the two motors must be equal, while model-free RL algorithms have to figure this out via trial and error, which leads to the aforementioned problem.

\section{Concluding Remarks}






A theoretical framework for understanding the mollification effect of policy gradient methods is proposed in this paper. From this perspective, we demonstrate that the mollification effect of policy gradient can be both advantageous and a bottleneck, depending on the practical cases in which it is applied and how the algorithm is implemented. Given that there are some control problems where policy gradient can almost filter the true solution as high-frequency noise, its strength should not be overestimated. Hence, a systematic understanding of the role RL should play in control and robotics needs to be established in the future.

\section*{Acknowledgements}

This material is based on work supported by NSF Career CCF 2047034, NSF CCF 2112665 TILOS AI Institute, NSF CCF DASS 2217723, ONR YIP N00014-22-1-2292, and Amazon Research Award. We thank the anonymous reviewers for their valuable suggestions.

\section*{Impact Statement}

This paper presents work to advance the theoretical understanding of the mechanisms and limitations of deep reinforcement learning algorithms. We believe the potential societal consequences are positive, as they will guide more principled use of machine learning techniques.

\bibliography{references}
\bibliographystyle{icml2023}

\newpage
\appendix
\onecolumn

\section{Convolution}
\label{app:fourier}

We introduce some fundamental concepts in Fourier analysis. Let $L^1(\mathbb{R}^N) = \{ f: \mathbb{R}^N \rightarrow \mathbb{R}^N: \int_{D} |f| < \infty  \}$ denote the space of integrable functions, and let $L^\infty(\mathbb{R}^N) = \{ f: \mathbb{R}^N \rightarrow \mathbb{R}^N: |f(x)| \leq M \textit{ a.e. x } \in \mathbb{R}^N \textit{ for some } M > 0 \}$ denote the space of essentially bounded measurable functions.

\paragraph{Notation.}

To avoid any possible ambiguity, we collect the notations below for later reference. Unless mentioned otherwise, let $U \subset \mathbb{R}^N$ be an open set, and we have:

\begin{itemize}

    \item $C_c^\infty(U)$: The space of infinitely differentiable functions $\phi: U \rightarrow \mathbb{R}$ with compact support in $U$ (i.e., the set $ \{ \theta \in \mathbb{R}^N : \phi(\theta) \neq 0  \}$ is a compact subset of $U$).

    \item $L^1_{loc}(U)$: The space of locally integrable functions on $U$ ($v \in L^1_{loc}(U)$ means that for any compact set $K \subset U$, $\phi$ is integrable on $K$).

    \item $\mathcal{B}(x, r)$: The open ball of radius $r > 0$ centered at $x$.

    \item $\mathcal{S}(\mathbb{R}^d)$: The space of rapid decreasing functions that consists of all indefinitely differentiable functions $f$ on $\mathbb{R}^d$ such that $\sup |x^\alpha (\frac{\partial}{\partial x})^\beta f(x)| < \infty$ for all multi-index $\alpha$ and $\beta$.

\end{itemize} 

The convolution operator is defined as:
\begin{definition}[Convolution]
 Suppose that $\phi_1 \in L^1(\mathbb{R^N})$ and $ \phi_2 \in L^{\infty}(\mathbb{R^N})$, then their convolution $\phi_1 * \phi_2 \in L^{\infty}(\mathbb{R^N})$ is defined by
    $$(\phi_1 * \phi_2) (x) = \int_{\mathbb{R^N}} \phi_1(y) \phi_2 (x - y) \ dy.$$
\end{definition}
An important property of convolution is that for any variable $x_i$, the partial derivative satisfies
$$\frac{\partial (\phi_1 * \phi_2)}{\partial x_i} =  \frac{\partial \phi_1}{\partial x_i} * \phi_2 = \phi_1 * \frac{\partial \phi_2}{\partial x_i}$$
which implies that $\phi_1 * \phi_2$ is smooth as long as one of $\phi_1$ and $\phi_2$ is smooth. In particular, we have $\nabla (\phi_1 * \phi_2) = \phi_1 * \nabla \phi_2$ when $\phi_2$ is smooth. 

\section{The Initial State Distribution $\rho_0$} 
\label{app:initial}

Here we briefly discuss why the distribution of initial states does not affect the smoothness of the original objective too much. To demonstrate this, let us rewrite the value function $V^{\pi_\theta}(s)$ as $V(s; \theta)$ to emphasize its dependence on $\theta$. Then, the objective function is given by
$$J(\theta) = \int V(s; \theta) \rho_0(s) \ \mathrm{d}s.$$
Thus, if $J(\theta)$ is differentiable, we will be able to push the differentiation inside the integral as $\rho_0$ is independent of $\theta$, i.e.,
\begin{equation}
\label{eq:swap}
    \frac{\mathrm{d} J}{\mathrm{d} \theta} = \int \frac{\partial V(s; \theta)}{\partial \theta} \rho_0(s) \ \mathrm{d}s.
\end{equation}
Therefore, the smoothness of $J(\theta)$ is closely associated with the differentiability of $V(s, \theta)$ \emph{regardless of} the distribution $\rho_0$. On the other hand, if the value function $V(s; \theta)$ is not differentiable in $\theta$-space for all $s \in S$ where $S \subset \mathbb{R}^n$ is a set of positive measure, $\frac{\mathrm{d} J}{\mathrm{d} \theta}$ no longer exists as the integral on the right-hand side of \eqref{eq:swap} diverges.

\section{Proof of Theorem~\ref{th:nobound}}
\label{app:proof}

\begin{proof}
    Without loss of generality, we assume $m = 1$. Since heat equations are linear, it suffice to prove the case of $g_T \equiv 0$ where $T = \sigma^2$.

    Note that for any terminal condition $\phi \in L^2(\mathbb{R})$, the solution of \eqref{backward} is given by 
    $$u'(x, t) = \frac{1}{\sqrt{2 \pi (T - t)}} \int_{\mathbb{R}} \phi(z)  e^{\frac{| x - z |^2}{2 (T - t)} } \ \mathrm{d}z.
    $$
    Now let $g'_T(x) = \frac{\epsilon}{2} e^{-\frac{|x|}{2}}$ so that $\| g'_T \| = \frac{\epsilon}{\sqrt{2}} < \epsilon$, and for any $t < T$ and $x \in \mathbb{R}$, we have
    \begin{align*}
         \frac{1}{\sqrt{2 \pi (T - t)}} \int_{\mathbb{R}} g'_T(z)  e^{\frac{| x - z |^2}{2 (T - t)} } \ \mathrm{d}z &= \frac{\epsilon}{2} \frac{1}{\sqrt{2 \pi (T - t)}} \int_{\mathbb{R}}  e^{\frac{| x - z |^2}{2 (T - t)} -\frac{|z|}{2}} \ \mathrm{d}z \\
        &\geq \frac{\epsilon}{2} \frac{1}{\sqrt{2 \pi (T - t)}} \int_{0}^\infty  e^{\frac{(z - x)^2}{2 (T - t)} -\frac{z}{2}} \ \mathrm{d}z \\
        &\geq \frac{\epsilon}{2} \frac{1}{\sqrt{2 \pi (T - t)}} \int_{0}^\infty  e^{\frac{x^2 - (x + \frac{T - t}{2})^2}{2 (T - t)}} \ \mathrm{d}z \\
        &= \infty,
    \end{align*}
    which implies that the solution $u'(x, t)$ does not exist and we complete the proof.
\end{proof}

\section{Lipschitz Continuous Objectives}  \label{sec:appro}

In Section~\ref{sec:uncertainty}, we have seen that the existence of $\lim_{t \rightarrow 0} u(x, t)$ depends on the smoothness of initial condition $g$. However, for many problems such as finite state-space MDPs in RL, the objective function is locally Lipschitz continuous. In this section, we will first introduce the notion of weak derivatives from the distribution theory \citep{rudin}, then prove the gradient estimated by policy gradient methods converges to the weak derivative of objective function. Consider the mollified objective function
\begin{equation}   \label{sg1}
    F(\mu, \sigma) = f \ast p_\sigma (\mu).
\end{equation}
where $p_\sigma(\mu)$ is the Gaussian distribution with mean $\mu$ and covariance $\sigma^2 \mathcal{I}$. The definition of weak derivatives that extends the notion of conventional derivatives:
\begin{definition}
  (Weak derivative)  Assume $U \subset \mathbb{R}^N$ is open. Suppose that $u, v \in L^1_{loc}(U)$ and $\beta \in \Lambda(N)$. We say that $v$ is the $\beta^{th}$-weak partial derivative of $u$, written as $D^\beta u = v$, if
    $$\int_{U} u D^\beta \phi \ dx = (-1)^{|\beta|} \int_U v \phi \ dx$$
    for all $\phi \in C^\infty_{c} (U)$, where $D^\beta \phi$ gives the corresponding $\beta^{th}$-conventional partial derivative of $\phi$ and $\Lambda(N) = \mathbb{Z}^N$ is the set of multi-indices of dimension $N$, that is,  every $\beta = (a_1, ..., a_N) \in \Lambda(N)$, $|\beta| = a_1 + ... + a_N$.
\end{definition}

The following example gives an idea of how weak derivatives are related to strong derivatives:

\begin{example}
    Consider the function $u(x) = |x|$ where $x \in U = (-1, 1)$, and define
    $$v(x) = \begin{cases}
      & 1, \quad x \geq 0\\
      & -1, \quad x < 0.
    \end{cases} \\$$
    Now let us show $Du = v$ using the preceding definition, that is, for any $\phi \in C_c^\infty(U)$, we need to prove
    $$\int_{-1}^1 u \phi' \ dx = - \int_{-1}^1 v \phi \ dx = \int_{-1}^0 \phi \ dx - \int_{0}^1 \phi \ dx .$$
    Applying the integration by parts, it yields
    \begin{align*}
        \int_{-1}^1 u \phi' \ dx &= -\int_{-1}^0 x \phi' \ dx  + \int_{0}^1 x \phi' \ dx  \\
        &= \phi(1) - \int_{0}^1 \phi \ dx - \phi(-1) + \int_{-1}^0 \phi \ dx \\
        &= \int_{-1}^0 \phi \ dx - \int_{0}^1 \phi \ dx \\
        &= - \int_{-1}^1 v \phi \ dx
    \end{align*}
    where we use the fact that $\phi \in C_c^\infty(U)$ (thus $\phi(-1) = \phi(1) = 0$) at the third equality.
\end{example}

Then we will see that the gradient estimated by policy gradient methods converges to the weak gradient of the true objective function as $\sigma \rightarrow 0$ when $f(\theta)$ is globally Lipschitz continuous:



\begin{theorem}
\label{th:convergence}
    Suppose that $f$ is globally Lipschitz in $U \subset \mathbb{R}^N$ and uniformly bounded over $\mathbb{R}^N$ where $U \subset \mathbb{R}^N$ is a bounded open set. Let $D^{\alpha_i} u$ denote the first-order derivative with respect to $x_i$, then 
    \begin{itemize}
        \item $f * p_{\sigma} \rightarrow f$, a.e.;

        \item  $ \frac{\partial (f * p_{\sigma})}{\partial x_i} \rightarrow D^{\alpha_i} f$, a.e.;
    \end{itemize}
    as $\sigma \rightarrow 0$.
\end{theorem}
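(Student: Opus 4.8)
The plan is to reduce both convergence statements to a single fact about Gaussian mollifiers: the family $\{p_\sigma\}_{\sigma>0}$ is an \emph{approximate identity}, being nonnegative, of unit mass, and concentrating near the origin in the sense that $\int_{|y|>\delta} p_\sigma(y)\,\mathrm{d}y \to 0$ for every $\delta>0$ as $\sigma\to 0$. Since the Gaussian admits a radially decreasing integrable majorant, the standard consequence (via the Lebesgue differentiation theorem, see \cite{rudin}) is that for any $h \in L^\infty(\mathbb{R}^N)$ one has $h * p_\sigma \to h$ at every Lebesgue point of $h$, hence almost everywhere. The first bullet is then immediate by taking $h = f$: since $f$ is uniformly bounded, almost every point is a Lebesgue point, so $f * p_\sigma \to f$ a.e. In fact the Lipschitz hypothesis makes $f$ continuous and the convergence uniform on compact subsets of $U$, but a.e.\ convergence is all that is claimed.

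For the second bullet the key is to transfer the derivative from the smooth kernel onto $f$. First I would invoke Rademacher's theorem: a function that is Lipschitz on $U$ is differentiable almost everywhere there, and its classical partial derivatives coincide a.e.\ with the weak derivatives $D^{\alpha_i} f$, which moreover lie in $L^\infty$ with $\|D^{\alpha_i} f\|_{L^\infty(U)} \le \mathrm{Lip}(f)$. Because $p_\sigma$ is smooth with rapidly decaying derivatives and $f$ is bounded, differentiation under the integral sign is justified, giving
\[
\frac{\partial (f * p_\sigma)}{\partial x_i}(x) = \int_{\mathbb{R}^N} f(y)\,\frac{\partial p_\sigma}{\partial x_i}(x-y)\,\mathrm{d}y = -\int_{\mathbb{R}^N} f(y)\,\frac{\partial}{\partial y_i} p_\sigma(x-y)\,\mathrm{d}y.
\]
Applying the defining identity of the weak derivative to move $\partial_{y_i}$ onto $f$ then yields $\frac{\partial (f*p_\sigma)}{\partial x_i} = (D^{\alpha_i} f) * p_\sigma$. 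Since $D^{\alpha_i} f \in L^\infty$, a second application of the approximate-identity theorem with $h = D^{\alpha_i} f$ gives $(D^{\alpha_i} f) * p_\sigma \to D^{\alpha_i} f$ at every Lebesgue point of $D^{\alpha_i} f$, and hence almost everywhere, which is the claim.

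The main obstacle is the integration-by-parts step that produces $\partial_{x_i}(f*p_\sigma) = (D^{\alpha_i} f) * p_\sigma$, because the weak-derivative definition in the paper is stated only for compactly supported test functions $\phi \in C_c^\infty(U)$, whereas $p_\sigma(x-\cdot)$ is a Gaussian with full support. I would handle this by approximating $p_\sigma(x-\cdot)$ with truncations $\chi_R(\cdot)\,p_\sigma(x-\cdot)$, where $\chi_R \in C_c^\infty$ equals $1$ on $\mathcal{B}(x,R)$, apply the weak-derivative identity to each truncation, and pass to the limit $R\to\infty$; the error terms vanish because $f$ and $D^{\alpha_i} f$ are bounded while the Gaussian and its gradient decay faster than any polynomial. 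A secondary, minor point is localization: since $f$ is only assumed Lipschitz on the bounded set $U$, for $x$ in the interior of $U$ the contribution to each convolution from outside $U$ is bounded by $\|f\|_{L^\infty}\int_{|x-y|>\mathrm{dist}(x,\partial U)} p_\sigma(x-y)\,\mathrm{d}y$, which decays like $e^{-c\,\mathrm{dist}(x,\partial U)^2/\sigma^2}$ up to polynomial factors and is therefore negligible as $\sigma\to 0$, so the asserted a.e.\ convergence holds on $U$.
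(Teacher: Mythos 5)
Your proof is correct and follows essentially the same route as the paper's: part (a) is the standard approximate-identity/Lebesgue-point argument (which the paper carries out by hand, splitting the integral at radius $r=(K+1)\sigma$ and using the uniform bound for the tail), and part (b) is Rademacher's theorem followed by the identity $\frac{\partial (f*p_\sigma)}{\partial x_i}=(D^{\alpha_i}f)*p_\sigma$ and a second application of part (a). Your truncation argument for the non-compactly-supported test function $p_\sigma(x-\cdot)$ and the localization remark address technical points that the paper's proof passes over silently, but they do not change the underlying approach.
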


Indeed, it explains why policy gradient methods work sufficiently well in finite-space problems \citep{agarwal}. On the other hand, if the objective function is locally Lipschitz continuous but not uniformly bounded, the convergence is not guaranteed as the function diverges fast as $\| \theta \| \rightarrow \infty$ as shown in the following example:
\begin{example}
    Consider the function $f(x) = e^{x^3}$ and $p_\sigma(x) = \frac{1}{\sigma \sqrt{2 \pi}}e^{-\frac{x^2}{2 \sigma^2}}$ is a Gaussian kernel. where $x \in \mathbb{R}$ and $\sigma > 0$. Apparently, $f$ is locally Lipschitz continuous. However, the integral
    $$\int_{-\infty}^\infty f(x) \ p_\sigma (y - x) \ dx =  \frac{1}{\sigma \sqrt{2 \pi}} \int_{-\infty}^\infty 3 x^2 e^{(x^3 -\frac{(y - x))^2}{2 \sigma^2})} \ dx = \infty.
    $$
    Thus, $f * p_\sigma$ does not exist for all $\sigma > 0$.
\end{example}
It indicates a fundamental drawback of Gaussian kernel: even though the objective function $f$ is smooth in a neighborhood of every policy parameter $\theta$, the mollification $f * p_\sigma$ may blow up if $f$ diverges exponentially fast at infinity. Therefore, a local gradient estimator may be a better choice in this case:
\begin{definition}
    (Bump function) For any $\sigma > 0$, let $\eta_{\sigma} \in C^\infty(\mathbb{R}^N)$ be defined by 
    \begin{equation}
        \eta_{\sigma}(x) = 
    \begin{cases}
      & C \exp(\frac{1}{|x|^2 - \sigma^2}), \quad |x | \leq \sigma\\
      & 0, \quad |x| > \sigma.
    \end{cases} \\
    \end{equation}
    where the constant $C > 0$ is selected so that $\int_{\mathbb{R}^N} \eta_{\sigma} dx = 1$.
\end{definition}

Unlike Gaussian kernel which uses information from the entire space, bump function samples data from a small neighborhood of $\theta_0$. However, as we mentioned in previous sections, Gaussian kernels suffer from two problems: first, the computational budget may not allow one to sample trajectories as many as they want, which leads to a significant sample bias when the variance is large; second, as a non-local estimator, Gaussian kernel samples data from the entire space so that the region of high loss/frequency may affect the estimate of local behavior. Therefore, using bump functions to estimate search directions may be a better option in such cases. The convergence result for locally Lipschitz continuous functions is established as follows:

\begin{theorem}
   \citep{evans} Suppose that $f$ is locally Lipschitz in some open set $U \subset \mathbb{R}^N$. Let $D^{\alpha_i} u$ denote the first-order derivative with respect to $x_i$, then 
    \begin{itemize}
        \item $ f * \eta_\sigma \rightarrow f$ almost everywhere;

        \item $\| D^{\alpha_i} f - \frac{\partial (f * \eta_{\sigma})}{\partial x_i} \|_{L^2(U)} \rightarrow 0$;
    \end{itemize}
    as $\sigma \rightarrow 0$.
\end{theorem}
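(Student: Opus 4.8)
The plan is to recognize this as the classical Friedrichs mollification argument from \citep{evans}, applied once we know that a locally Lipschitz $f$ possesses weak first derivatives. First I would invoke Rademacher's theorem: since $f$ is locally Lipschitz on $U$, it is differentiable almost everywhere, and its pointwise partial derivatives coincide a.e.\ with its weak derivatives $D^{\alpha_i} f$, which lie in $L^\infty_{loc}(U)$ and hence in $L^2_{loc}(U)$. The bump function $\eta_\sigma$ is a standard mollifier --- nonnegative, in $C^\infty(\mathbb{R}^N)$ with support in $\overline{\mathcal{B}(0,\sigma)}$, and with unit integral --- so $f * \eta_\sigma$ is well defined and smooth on the inner domain $U_\sigma := \{ x \in U : \mathcal{B}(x,\sigma) \subset U \}$.

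For the first bullet I would use that $f$ is continuous (being locally Lipschitz): at every $x \in U_\sigma$ the average $\int f(y)\eta_\sigma(x-y)\,\mathrm{d}y$ differs from $f(x)$ by at most $\sup_{|y-x|\le\sigma}|f(y)-f(x)|$, which tends to $0$, giving local uniform convergence $f * \eta_\sigma \to f$ and in particular the claimed a.e.\ convergence. The heart of the argument is the second bullet, which rests on the commuting identity
\[
\frac{\partial (f * \eta_\sigma)}{\partial x_i}(x) = \int_{U} f(y)\,\frac{\partial}{\partial x_i}\eta_\sigma(x-y)\,\mathrm{d}y = -\int_{U} f(y)\,\frac{\partial}{\partial y_i}\bigl[\eta_\sigma(x-y)\bigr]\,\mathrm{d}y = \bigl( (D^{\alpha_i} f) * \eta_\sigma \bigr)(x),
\]
valid for $x \in U_\sigma$. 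The first equality is differentiation under the integral (the convolution property recalled in Appendix~\ref{app:fourier}), the second is the chain rule in $y$, and the third is precisely the definition of the weak derivative $D^{\alpha_i} f$ applied to the test function $\phi(y) = \eta_\sigma(x-y) \in C_c^\infty(U)$ --- this last step is exactly why the support of $\eta_\sigma$ must sit inside $U$, forcing the restriction to $U_\sigma$.

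With the commuting identity in hand, the second bullet reduces to the $L^2$ convergence of the mollification of the \emph{fixed} function $D^{\alpha_i} f \in L^2_{loc}(U)$: since mollifiers approximate the identity on $L^2$, $(D^{\alpha_i} f) * \eta_\sigma \to D^{\alpha_i} f$ in $L^2$ on any subdomain compactly contained in $U$, and along a subsequence also a.e. The main obstacle --- and a point I would flag explicitly --- is the boundary: smoothness and the commuting identity hold only on $U_\sigma$, so the statement the machinery actually delivers is $L^2_{loc}$ convergence (equivalently $L^2$ convergence on each $V \Subset U$). Asserting convergence in the full norm $\|\cdot\|_{L^2(U)}$ is slightly stronger and would require controlling $f$ up to $\partial U$ (for instance a global Lipschitz bound on a bounded $U$ together with an extension, as in Theorem~\ref{th:convergence}); absent such a hypothesis I would phrase the conclusion as $L^2_{loc}$ convergence.
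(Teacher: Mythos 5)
Your proof is correct, but note that the paper itself gives no proof of this statement: it is quoted from \citep{evans} (it is the standard mollifier theorem, Appendix C.5 / Section 5.3.1 there), and the paper's Appendix only proves the companion result, Theorem~\ref{th:convergence}, for the Gaussian kernel. Measured against that companion proof, your route differs in two instructive ways. First, for the pointwise convergence you exploit the continuity of a locally Lipschitz $f$ to get local uniform convergence directly, whereas the paper's Gaussian-case argument must invoke the Lebesgue differentiation theorem and a separate tail estimate precisely because the Gaussian kernel has unbounded support; the compact support of $\eta_\sigma$ makes your simpler argument available and is indeed the reason the bump function needs only \emph{local} Lipschitzness. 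Second, your commuting identity $\partial_{x_i}(f\ast\eta_\sigma)=(D^{\alpha_i}f)\ast\eta_\sigma$ via the test function $\phi(y)=\eta_\sigma(x-y)\in C_c^\infty(U)$ is the same mechanism as the paper's equation for the Gaussian case, but you correctly localize it to $U_\sigma=\{x: \mathcal{B}(x,\sigma)\subset U\}$, which the Gaussian argument cannot do. Your caveat is also well taken: the machinery delivers $L^2_{\mathrm{loc}}(U)$ (equivalently $L^2(V)$ for every $V$ compactly contained in $U$) rather than convergence in the full $\|\cdot\|_{L^2(U)}$ norm as literally written in the statement, and this matches the form in which Evans states the result; the discrepancy is an imprecision in the paper's transcription, not a gap in your argument.
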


\section{Proofs Omitted in Section~\ref{sec:appro}} \label{app:appro}

The following lemma will be helpful:
\begin{lemma}  \label{lebesgue}
    (Lebesgue Differentiation Theorem, \cite{stein})
     Let $f: \mathbb{R}^N \rightarrow \mathbb{R}$ is locally integrable, then for a.e. $x_0 \in \mathbb{R}^N$, it has
     \begin{itemize}
         \item $\frac{1}{Vol(\mathcal{B}(x_0, r))} \int_{\mathcal{B}(x_0, r)} f \ dx \rightarrow f(x_0)$ as $r \rightarrow 0$;

         \item $\frac{1}{Vol(\mathcal{B}(x_0, r))} \int_{\mathcal{B}(x_0, r)} |f(x) - f(x_0)| \ dx \rightarrow 0$ as $r \rightarrow 0$;
     \end{itemize}
     where $\mathcal{B}(x_0, r)$ is the open ball of radius $r$ centered at $x_0$ and $Vol(\mathcal{B}(x_0, r))$ is the volume of the ball.
\end{lemma}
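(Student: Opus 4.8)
The final statement is the Lebesgue Differentiation Theorem, which is cited as classical; the plan is to sketch its standard proof. I would first observe that it suffices to prove the second (Lebesgue-point) assertion, since the first follows immediately from it by
\[
\left|\frac{1}{Vol(\mathcal{B}(x_0,r))}\int_{\mathcal{B}(x_0,r)} f\,dx - f(x_0)\right| \leq \frac{1}{Vol(\mathcal{B}(x_0,r))}\int_{\mathcal{B}(x_0,r)} |f(x)-f(x_0)|\,dx .
\]
Because being a Lebesgue point is a purely local property, I would then reduce the locally integrable case to the integrable one: fixing $R \in \mathbb{N}$ and replacing $f$ by $f\mathbf{1}_{\mathcal{B}(0,R)} \in L^1(\mathbb{R}^N)$ changes none of the relevant averages for $x_0$ with $|x_0| < R$ and small $r$, so proving the claim for $L^1$ functions and taking a countable union over $R$ suffices.

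The technical engine is the Hardy--Littlewood maximal function $(Mf)(x) = \sup_{r>0}\frac{1}{Vol(\mathcal{B}(x,r))}\int_{\mathcal{B}(x,r)}|f|\,dy$, for which I would establish the weak-type $(1,1)$ bound $Vol(\{Mf > \lambda\}) \leq \frac{C_N}{\lambda}\|f\|_{L^1}$. I expect this to be the main obstacle: its proof rests on the Vitali covering lemma (in its $5r$ form), whereby from any family of balls covering a compact subset of $\{Mf>\lambda\}$ one extracts a finite \emph{disjoint} subfamily whose fivefold dilates still cover the set, after which one sums the $L^1$ mass carried by the disjoint balls. All of the genuine geometric content of the theorem is concentrated in this covering argument; everything downstream is soft.

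With the maximal inequality available, I would invoke the density of $C_c(\mathbb{R}^N)$ in $L^1(\mathbb{R}^N)$. For a continuous $g$ the averaging limit holds at every point, so writing $\Omega f(x) = \limsup_{r \to 0}\frac{1}{Vol(\mathcal{B}(x,r))}\int_{\mathcal{B}(x,r)}|f(y)-f(x)|\,dy$ and splitting $f = g + (f-g)$, the triangle inequality yields the pointwise bound $\Omega f(x) \leq M(f-g)(x) + |f(x)-g(x)|$. For each $\lambda>0$ the set $\{\Omega f > \lambda\}$ is then contained in $\{M(f-g) > \lambda/2\} \cup \{|f-g| > \lambda/2\}$, whose measure I would control by $\frac{2C_N}{\lambda}\|f-g\|_{L^1} + \frac{2}{\lambda}\|f-g\|_{L^1}$ using the maximal inequality and Chebyshev's inequality. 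Since $\|f-g\|_{L^1}$ can be made arbitrarily small, $Vol(\{\Omega f > \lambda\}) = 0$ for every $\lambda$; taking $\lambda = 1/n$ and a countable union gives $\Omega f = 0$ almost everywhere, which is precisely the Lebesgue-point property. I anticipate no difficulty in these concluding measure-theoretic estimates, the only bookkeeping to watch being the consistent use of $f$ versus $f\mathbf{1}_{\mathcal{B}(0,R)}$ in the localization step.
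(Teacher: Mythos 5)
Your sketch is correct and is precisely the standard proof of the Lebesgue Differentiation Theorem (reduction to the Lebesgue-point property, localization to $L^1$, the weak-type $(1,1)$ maximal inequality via a Vitali covering argument, and approximation by $C_c(\mathbb{R}^N)$), which is the argument in the cited reference. The paper itself offers no proof of this lemma --- it is quoted as a classical result --- so there is nothing to compare beyond noting that your route matches the textbook one and contains no gaps.
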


Specifically, a point $x_0$ at which $\frac{1}{Vol(\mathcal{B}(x_0, r))} \int_{\mathcal{B}(x_0, r)} |f(x) - f(x_0)| \ dx \rightarrow 0$ as $r \rightarrow 0$ is called a \textit{Lebesgue point} of $f$.

\paragraph{Proof of Theorem~\ref{th:convergence}:}

    (a) Suppose that $\theta_0 \in \mathbb{R}^N$ is a Lebesgue point of $f$. Let $|f(\theta)| \leq M$ for all $\theta \in \mathbb{R}^N$, then for any $\epsilon > 0$, there exists $K > 0$ such that
    \begin{equation}   \label{proof1}
        \int_{|\theta - \theta_0| \geq r} | f(\theta) - f(\theta_0) | \ p_\sigma (\theta_0 - \theta) \ d\theta \leq 2 M \int_{|\theta - \theta_0| \geq r}  \ p_\sigma (\theta_0 - \theta) \ d\theta < \frac{\epsilon}{2}
    \end{equation}
    for all $r \geq K \sigma$ and $\sigma > 0$, which is possible because $p_\sigma$ is a Gaussian density function. According to Lemma~\ref{lebesgue}, there exists $r_0 > 0$ such that for any $r < r_0$, 
    $$\frac{1}{Vol(\mathcal{B}(\theta_0, r))} \int_{|\theta - \theta_0| < r} | f(\theta) - f(\theta_0) | \ d\theta < \frac{\epsilon}{2} \frac{(2 \pi)^{\frac{N}{2}}}{C (K + 1)^N }$$
    where $C = \frac{Vol(\mathcal{B}(x_0, r))}{r^N}$ depends only on the dimension $N$.

    On the other hand, the integral over $\mathcal{B}(x_0, r)$ is bounded by
    \begin{align*}
        \int_{|\theta - \theta_0| < r} | f(\theta) - f(\theta_0) | \ p_\sigma (\theta_0 - \theta) \ d\theta &= \frac{1}{(2 \pi)^{\frac{N}{2}} \sigma^N} \int_{|\theta - \theta_0| < r} | f(\theta) - f(\theta_0) | \ e^{-\frac{|\theta - \theta_0|^2}{2 \sigma^{2N}}} \ d\theta  \\
        &\leq \frac{1}{(2 \pi)^{\frac{N}{2}} \sigma^N} \int_{|\theta - \theta_0| < r} | f(\theta) - f(\theta_0) | \ d\theta  \\
        &\leq \frac{1}{(2 \pi)^{\frac{N}{2}} \sigma^N} \frac{\epsilon}{2} \frac{(2 \pi)^{\frac{N}{2}}}{(K + 1)^N} r^N  \\
        &\leq \frac{\epsilon}{2}  (\frac{r}{(K + 1) \sigma})^N 
    \end{align*}
    In particular, plugging $r = (K + 1) \sigma$ into the last inequality yields
    \begin{equation}   \label{proof2}
        \int_{|\theta - \theta_0| < r} | f(\theta) - f(\theta_0) | \ p_\sigma (\theta_0 - \theta) \ d\theta < \frac{\epsilon}{2}
    \end{equation}
    when $\sigma < \frac{r_0}{K + 1}$. 
    Combining \eqref{proof1} and \eqref{proof2} yields
    $$\int_{\mathbb{R}^N} | f(\theta) - f(\theta_0) | \ p_\sigma (\theta_0 - \theta) \ d\theta < \epsilon.$$

    Since $|f * p_\sigma (\theta_0) - f(\theta_0) | \leq \int_{\mathbb{R}^N} | f(\theta) - f(\theta_0) | \ p_\sigma (\theta_0 - \theta) \ d\theta$,
    using the fact that Lebesgue points are almost everywhere in $\mathbb{R}^N$ completes the proof.

    (b) According to the Rademacher's Theorem, $f$ is differentiable almost everywhere in $U$ where $U \subset \mathbb{R}^N$ is an open set. For each $x_i$, applying the definition of weak derivative yields
    \begin{equation}   \label{proof3}
        D^{\alpha_i} (f * p_\sigma) (\theta)  = (D^{\alpha_i} f ) * p_\sigma (\theta)
    \end{equation}
    for a.e. $\theta \in U$ since weak derivatives coincide with strong derivatives when the function is smooth. Specifically, the right-hand side of \eqref{proof3} exists because $f$ is globally Lipschitz continuous so that $D^{\alpha_i} f$ is bounded almost everywhere (and hence $D^{\alpha_i} f \in L^{\infty}(\mathbb{R}^N)$), which implies that the integral $\int_{\mathbb{R}^N} (D^{\alpha_i} f ) (y - x) \ p_\sigma (x) \ dy$ exists.
    
    Applying the result of part (a) completes the proof.

\newpage
\section{Details of experiments}
\label{app:detail}

The hyperparameters used in Section~\ref{sec:experiment} is summarized in Table~\ref{sample-table}. Note that all gradients are normalized.

\begin{table}
\begin{center}
\begin{small}
\begin{tabular}{lcccr}
\toprule
\quad & Quadrotor & Double pendulum & Hopper \\
\midrule
Batch    & 16 & 16 & 32 \\
Epoch    & 50 & 50 & 100 \\
Horizon    & 1000 & 1000 & 1000 \\
Discount factor    & 0.99 & 0.99 & 0.99 \\

\bottomrule

\end{tabular}
\end{small}
\end{center}
\caption{Hyperparameters for experiments.}
\label{sample-table}
\end{table}

\subsection{Hopper stand}

Regarding the controller, we use a 2-layer neural network $u = W_2 \tanh{W_1 s}$ where the width of the hidden layer is 16. the activation function is \emph{tanh}. The reward function $R(s, a) = -(s^2_2 + s^2_3 + s^2_4 + s^2_5 + 0.1 \ (s^2_6 + s^2_7 + s^2_8 + s^2_9 + s^2_{10} + s^2_{11})) - 0.001 \ |a|^2$ where the coordinates are specified as in Table~\ref{sample-table2}. The stepsize for policy update in each epoch is $\delta = 1$.

\begin{table}
\begin{center}
\begin{small}
\begin{tabular}{lcccr}
\toprule
Coordinate & Observation  \\
\midrule
$s_2$    & Angle of the torso  \\
$s_3$    & Angle of the thigh joint  \\
$s_4$    & Angle of the leg joint  \\
$s_5$    & Angle of the foot joint  \\
$s_6$    & Horizontal velocity the torso  \\
$s_7$    & Vertical velocity the torso  \\
$s_8$    & Angular velocity of the angle of the torso  \\
$s_9$    & Angular velocity of the angle of the thigh hinge  \\
$s_{10}$    & Angular velocity of the angle of the leg hinge  \\
$s_{11}$    & Angular velocity of the angle of the foot hinge  \\

\bottomrule

\end{tabular}
\end{small}
\end{center}
\caption{What the coordinates correspond to.}
\label{sample-table2}
\end{table}

\subsection{Double pendulum}

The dynamics of double pendulum (Figure~\ref{fig:plots} (b)) is solved from the manipulation equation
$$M(q) \ \ddot{q} + C(q, \dot{q}) \ \dot{q} = \tau(q) + B u$$
where 
$$M(q) = \begin{bmatrix}
I_1 + I_2 + m_2 l^2_1 + 2 m_2 l_1 l_{c2} \cos(\theta_2) & I_2 + m_2 l_1 l_{c2} \cos(\theta_2) \\
I_2 + m_2 l_1 l_{c2} \cos(\theta_2) & I_2 \\
\end{bmatrix}, \quad q = \begin{bmatrix}
\theta_1 \\
\theta_2 \\
\end{bmatrix}, \quad B = \begin{bmatrix}
1 & 0\\
0 & 1 \\
\end{bmatrix},$$
$$C(q, \dot{q}) = \begin{bmatrix}
-2 m_2 l_1 l_{c2} \sin(\theta_2) \dot{\theta}_2 & -m_2 l_1 l_{c2} \sin(\theta_2) \dot{\theta}_2  \\
m_2 l_1 l_{c2} \sin(\theta_2) \dot{\theta}_1  & 0 \\
\end{bmatrix}, $$
$$ 
\tau(q) = \begin{bmatrix} 
-m_1 g l_{c1} \sin(\theta_1) - m_2 g (l_1 \sin(\theta_1) + l_{c2} \sin(\theta_1 + \theta_2)) \\
-m_2 g l_{c2} \sin(\theta_1 + \theta_2) \\
\end{bmatrix}, \quad 
u = \begin{bmatrix}
u_1 \\
u_2 \\
\end{bmatrix}
$$
with $I_1 = I_2 = 0.1, m = 0.15, g = 9.81, l_1 = l_2 = 0.5, l_{c1} = l_{c2} = 0.25$. The controller $u = -Ks$ is linear and the initial feedback gain matrix is 
$$K_0 = \begin{bmatrix}
-20 & -20.0854 & -21.4826 & -10.0516 \\
-18.22 & -19.143 & -9.2905 & -6.6695 \\
\end{bmatrix}.$$

The reward function is $R(s, a) = 5  (\theta^2_1 + \theta^2_2) + 0.5 (\dot{\theta}^2_1 +\dot{\theta}^2_2) + 0.00005  |a|^2$. The stepsizes for simulation and policy updates are $\Delta t = 0.01$ and $\delta = 1$. The control inputs are saturated, i.e., $u = [\max\{-10, \min\{u_1, 10\} \}, \max\{-10, \min\{u_2, 10\} \}]^T$.

\begin{figure}[h!]
\centering

\subfigure[]{\includegraphics[width=0.45\textwidth]{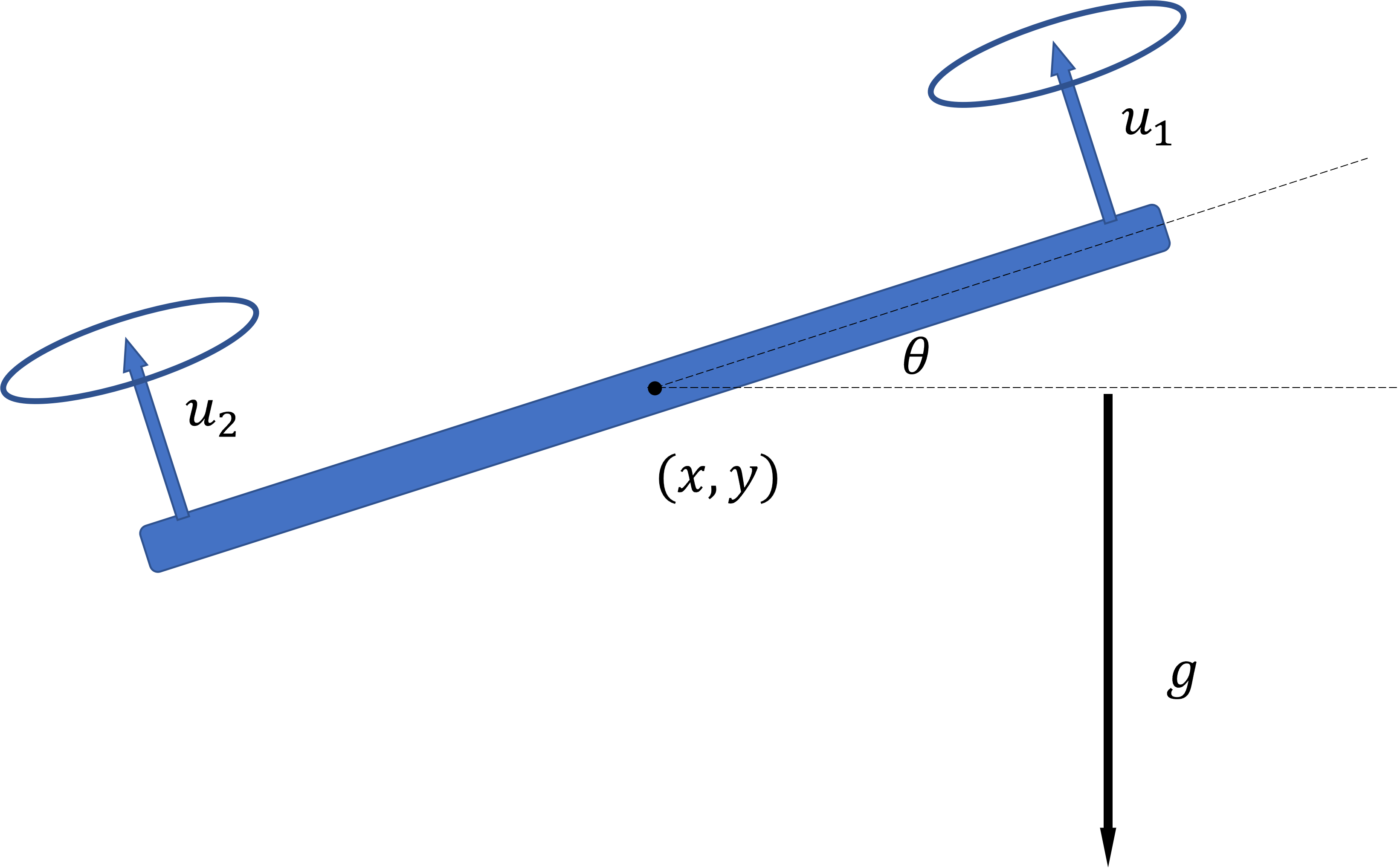}}
\subfigure[]{\includegraphics[width=0.45\textwidth]{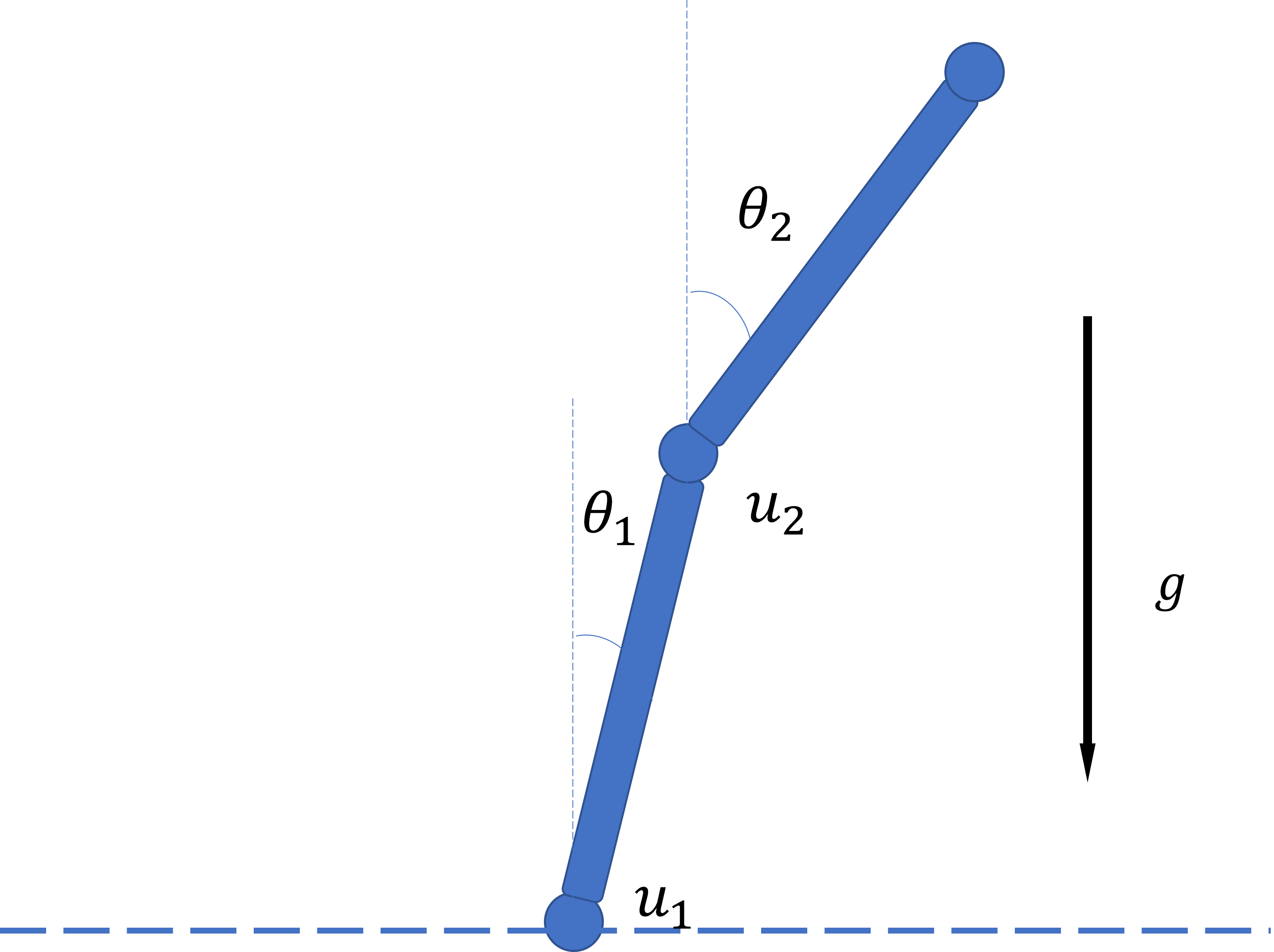}}

\caption{(a) Planar quadrotor; (b) double pendulum.}
\label{fig:plots}

\end{figure}

\subsection{Planar quadrotor}

The dynamics of the planar quadrotor (Figure~\ref{fig:plots} (a)) is 
\begin{align*}
    \dot{x}_1 &= x_2 \\
    \dot{x}_2 &= -\frac{1}{m} (u_1 + u_2) \sin{\theta} \\
    \dot{y}_1 &= x_2 \\
    \dot{y}_2 &= \frac{1}{m} (u_1 + u_2) \cos{\theta} - mg \\
    \dot{\theta} &= w \\
    \dot{w} &= \frac{r}{I} (u_1 - u_2) 
\end{align*}
where $m = 1.0, I = 0.1, r = 0.5, g = 9.81$ and $u_1, u_2 \in \mathbb{R}$ are the control inputs. The reward function is $R(s, a) = - (x^2_1 + y^2_1 + \theta^2 + 0.1 \ (x^2_2 + y^2_2 + w^2) + 0.0001 \ |a|^2)$ and $\Delta t = 0.1$ is the stepsize of discretization. The controller $u = -Ks + b$ is linear and the initial policy is $u_0 = -K_0 s + b_0$ with
$$K_0 = \begin{bmatrix}
-2.2361 & -3.3404 & 2.2361 & 2.69 & 13.5092 & 2.7752 \\
2.2361 & 3.3404 & 2.2361 & 2.69 & -13.5092 & -2.7752 \\
\end{bmatrix}, \quad 
b_0 = \begin{bmatrix}
4.905  \\
4.905 \\
\end{bmatrix}.
$$

$u_0$ is the optimal policy solved by LQR methods. The standard deviation of Gaussian policy is fixed to $\sigma = 0.1$ and the stepsize for policy updates in each epoch is $\delta = 0.001$. The control inputs are saturated, i.e., $u = [\max\{0, \min\{u_1, 10\} \}, \max\{0, \min\{u_2, 10\} \}]^T$

\newpage
\section{Behaviors of Hopper}
\label{sec:behavior}

The behaviors of the hopper with different variances are presented below.

\begin{figure}[h!]
\centering

\subfigure[]{\includegraphics[width=0.24\textwidth]{0.00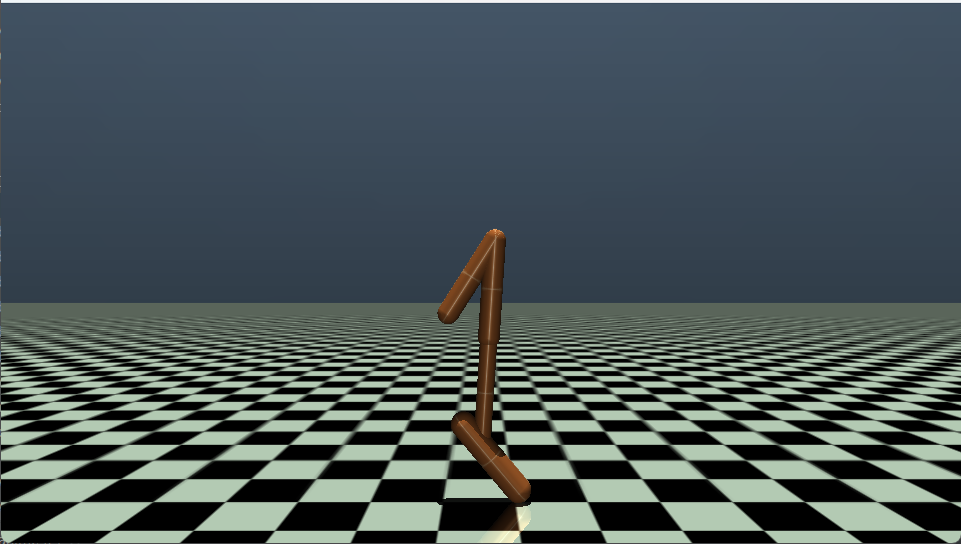}}
\subfigure[]{\includegraphics[width=0.24\textwidth]{0.00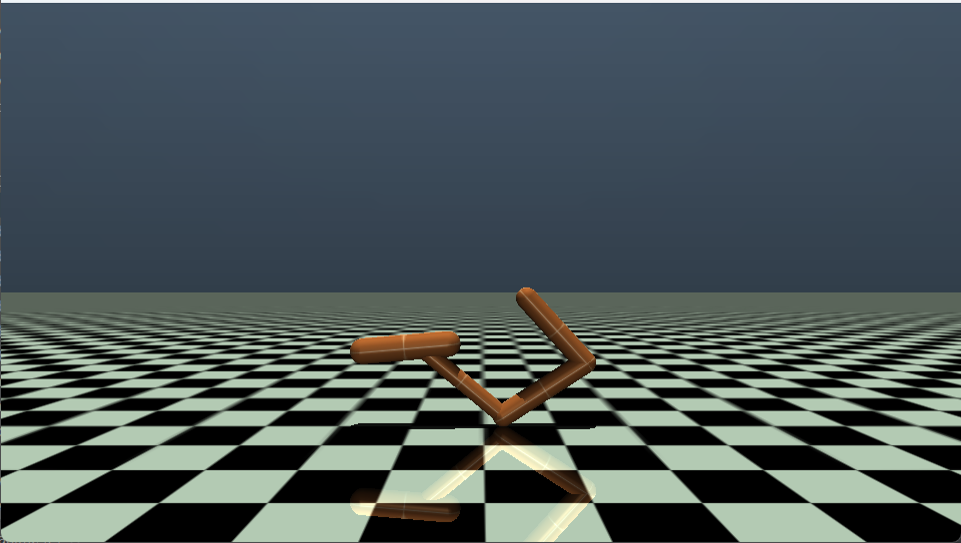}}
\subfigure[]{\includegraphics[width=0.24\textwidth]{0.00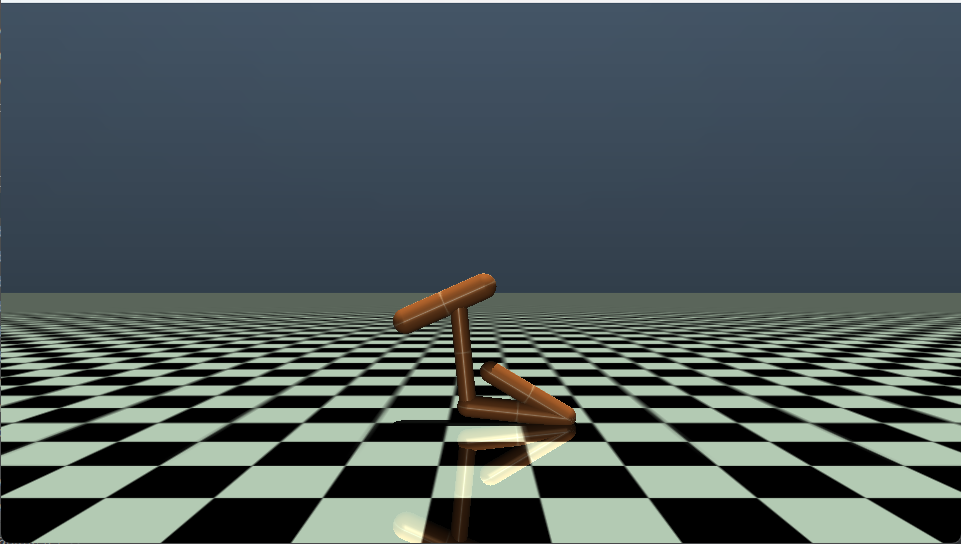}}
\subfigure[]{\includegraphics[width=0.24\textwidth]{0.00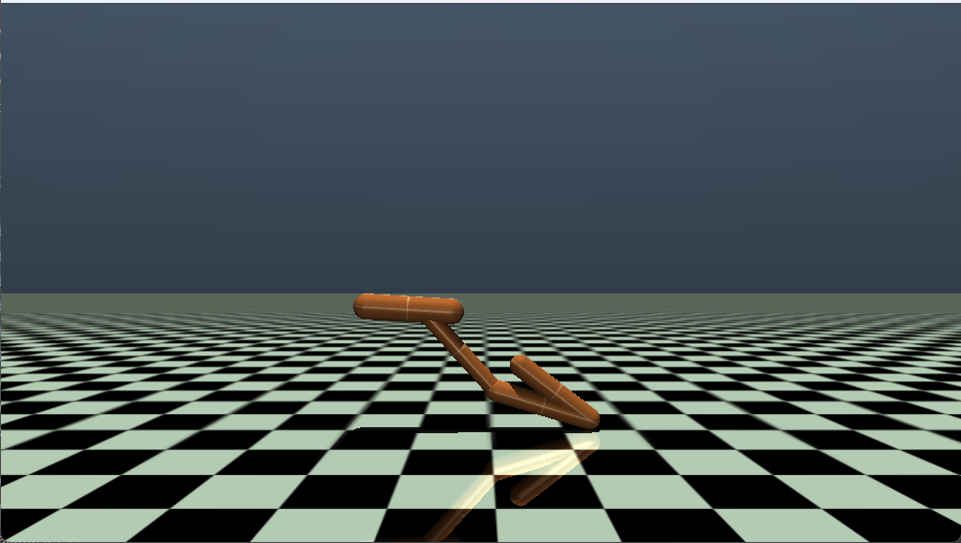}}

\caption{$\sigma = 0.005$.}

\end{figure}

\begin{figure}[h!]
\centering

\subfigure[]{\includegraphics[width=0.24\textwidth]{0.051.png}}
\subfigure[]{\includegraphics[width=0.24\textwidth]{0.052.png}}
\subfigure[]{\includegraphics[width=0.24\textwidth]{0.053.png}}
\subfigure[]{\includegraphics[width=0.24\textwidth]{0.054.png}}

\caption{$\sigma = 0.05$.}

\end{figure}

\begin{figure}[h!]
\centering

\subfigure[]{\includegraphics[width=0.24\textwidth]{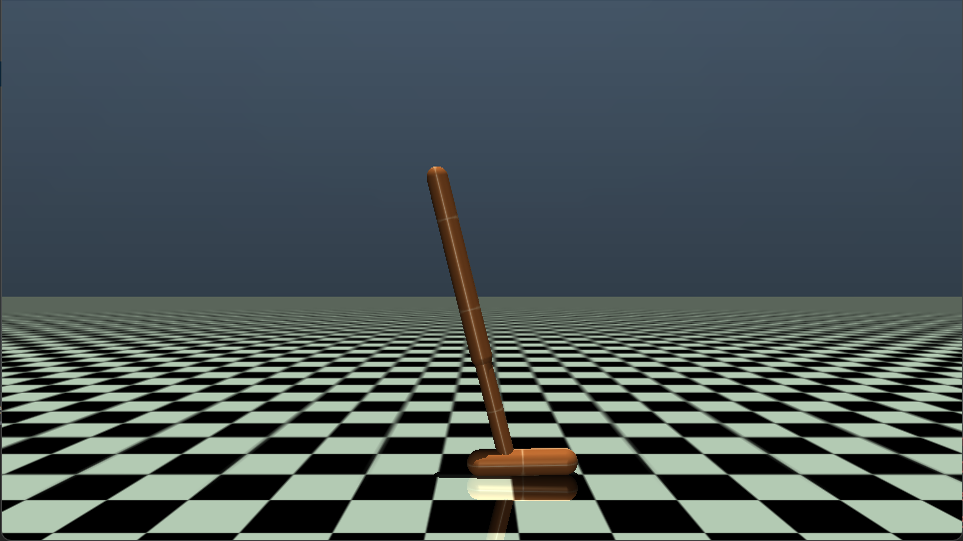}}
\subfigure[]{\includegraphics[width=0.24\textwidth]{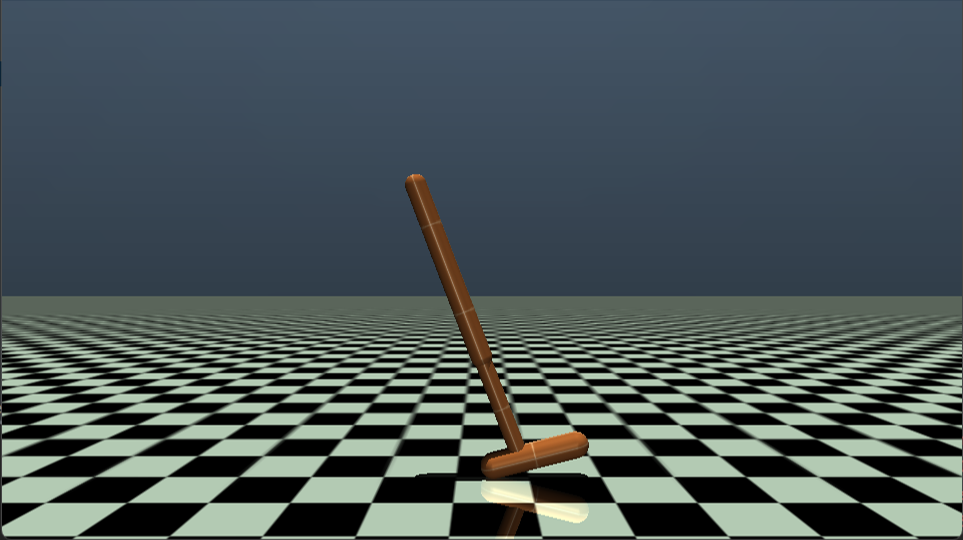}}
\subfigure[]{\includegraphics[width=0.24\textwidth]{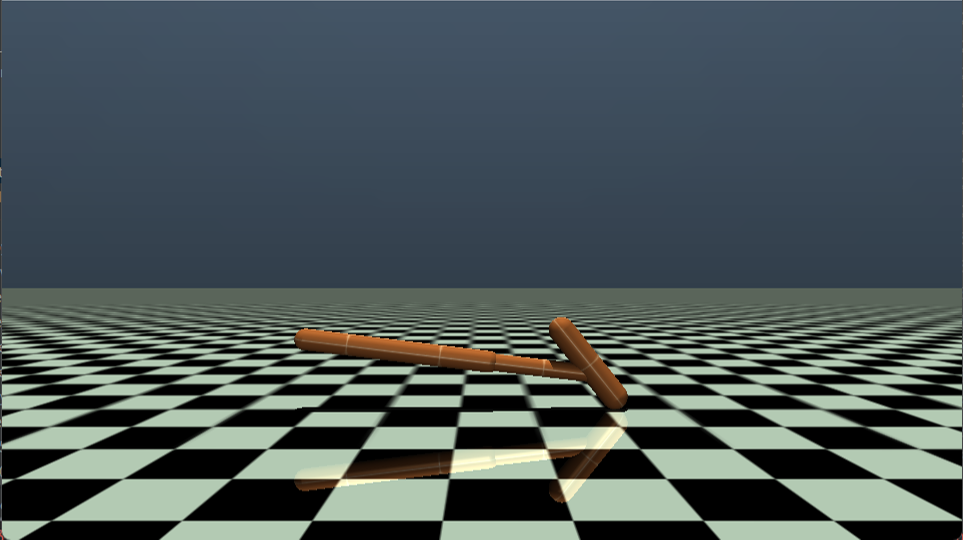}}
\subfigure[]{\includegraphics[width=0.24\textwidth]{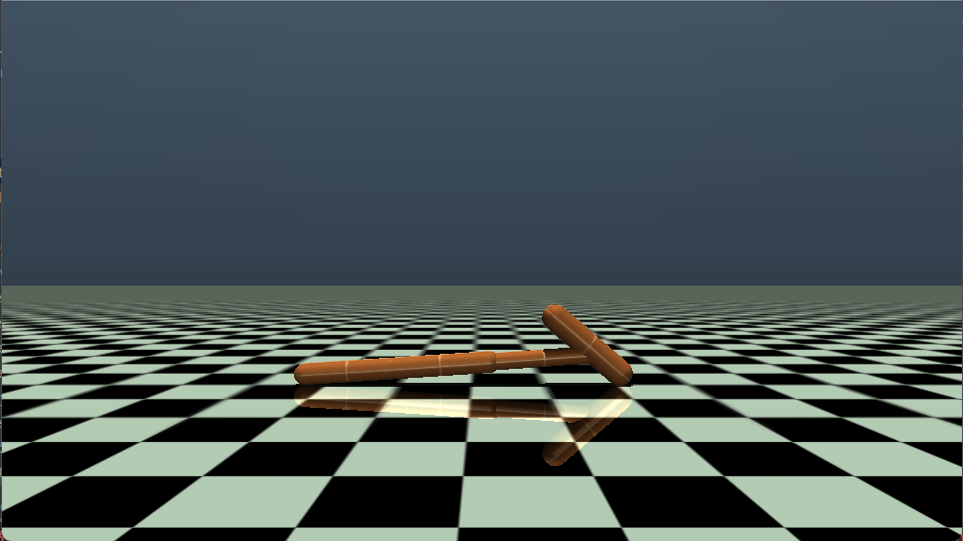}}

\caption{$\sigma = 0.5$.}

\end{figure}

\begin{figure}[h!]
\centering

\subfigure[]{\includegraphics[width=0.24\textwidth]{51.png}}
\subfigure[]{\includegraphics[width=0.24\textwidth]{52.png}}
\subfigure[]{\includegraphics[width=0.24\textwidth]{53.png}}
\subfigure[]{\includegraphics[width=0.24\textwidth]{54.png}}

\caption{$\sigma = 5$.}

\end{figure}


\end{document}